\documentclass{article}

\usepackage{PRIMEarxiv}

\usepackage[utf8]{inputenc} 
\usepackage[T1]{fontenc}    
\usepackage{url}            
\usepackage{booktabs}       
\usepackage{nicefrac}       
\usepackage{microtype}      
\usepackage{lipsum}
\usepackage{fancyhdr}       
\usepackage{graphicx}       
\graphicspath{{media/}}     
\usepackage{multirow}%
\usepackage{amsmath,amssymb,amsfonts}%
\usepackage{amsthm}%
\usepackage{mathrsfs}%
\usepackage[title]{appendix}%
\usepackage{textcomp}%
\usepackage{manyfoot}%
\usepackage{algorithm}%
\usepackage{algorithmicx}%
\usepackage{algpseudocode}%
\usepackage{listings}%
\usepackage[table]{xcolor}
\usepackage{lineno}

\theoremstyle{plain}
\newtheorem{proposition}{Proposition}

\newtheorem{corollary}{Corollary}
\theoremstyle{remark}
\newtheorem{remark}{Remark}

\usepackage{hyperref}

\title{DAW: Dynamics-Aware Weighting for Deep Learning Forecasts of Chaotic Systems
}

\author{
  Zhou Fang \\
  Department of Mechanical Engineering \\
  National University of Singapore \\
  Singapore\\
  \texttt{zhou.fang@u.nus.edu} \\
   \And
  Gianmarco Mengaldo \\
  Department of Mechanical Engineering \\ 
  Department of Mathematics \\
  National University of Singapore \\
  Singapore\\
  \texttt{mpegim@nus.edu.sg} \\
}

\begin{document}
\maketitle

\begin{abstract}
Deep learning surrogates have become powerful tools for simulating and forecasting complex dynamical systems, yet their utility remains limited by catastrophic error accumulation during long-term autoregressive rollouts.
This behavior is partly tied to the nature of the underlying systems: chaotic spatiotemporal systems, such as the Kuramoto-Sivashinsky (KS) equation, visit phase space unevenly, with dynamics dominated by recurrent, low-dimensional quiescent states (e.g., near-laminar flows) and characterized by rare and dynamically complex topological transitions (e.g., wave-merging events).
Trained under a sample-wise uniform objective, standard neural surrogates allocate their finite capacity to the statistically more numerous low-dimensional quiescent states, systematically under-representing the transient regimes that trigger disproportionate, localized errors.
Existing imbalanced-regression methods attempt to tackle this challenge by reweighting samples according to target-space density. 
However, statistical target-space rarity need not coincide with the intrinsic dynamical rarity -- the recurrence geometry of the attractor that contributes directly to the source of the imbalance. 
To address this, we introduce Dynamics-Aware Weighting (DAW), a data-centric objective reweighting framework. 
Using the local dimension $d$ from dynamical systems theory as an \textit{a priori} measure of a state's active degrees of freedom or complexity, DAW reshapes the loss landscape to allocate representational capacity toward the sparse, high-$d$ regimes where forecast errors are systematically large. 
Evaluated on the chaotic KS equation, DAW consistently outperforms uniform training, purely statistical density weighting, and its randomly permuted ablation.
In particular, DAW reduces the long-term error in autoregressive forecasting relative to all baselines. 
Event-level analysis shows that DAW achieves this by suppressing the localized error amplifications incurred during sharp jumps in the local dimension $d$, which typically accompany complex physical processes such as wave-merging in the KS system.
\end{abstract}

\keywords{Dynamical system \and forecasting \and imbalanced regression}

\section{Introduction}\label{sec:intro}

Deep learning (DL) surrogates have become powerful tools for modeling and forecasting complex dynamical systems, with applications spanning weather and climate, fluid dynamics, and finance, among others~\cite{pathak2018model, reichstein2019deep, chattopadhyay2020data, rasp2020weatherbench, lario2022neural, wang2025condensnet, bracco2025machine}. 
In many settings, these models match or exceed the accuracy of traditional statistical and physics-based approaches at a lower computational cost~\cite{willard2022integrating}. 
However, they remain prone to error accumulation over long-term autoregressive rollouts~\cite{huang2026physicscorrect,  kohl2026benchmarking}. 

This behavior is partly tied to the nature of the underlying systems. 
The phase space of a chaotic attractor is visited unevenly: some regions recur frequently, while others are reached only rarely~\cite{faranda2017dynamical, fang2025dynamical}. 
The Lorenz-63 system is a common example -- its trajectory spends most of its time spiraling within two lobes, whereas the switches between them occur far less often~\cite{alberti2023scale}. 
Trained with a sample-wise uniform objective such as the mean squared error (MSE), a DL surrogate implicitly treats all states as equally informative and allocates its approximation capacity in proportion to the sampling measure, fitting the frequently recurring states while under-representing the rarer ones.

This difficulty is an instance of \emph{imbalanced regression}, in which the data follow a non-uniform distribution and a model that minimizes average error favors the densely sampled majority, performing poorly on the sparse regions of the target space~\cite{yang2021delving, steininger2021density}. 
The mismatch has been observed in both simulated and real-world dynamical systems, yet has so far received limited attention~\cite{puetz2026deconstructing}. 

A common response to this imbalance is to reweight the training objective so that under-represented states contribute more, assigning each sample a scalar importance signal. 
Two lines of work follow this strategy, distinguished by whether the importance signal is drawn from the model or from the data. 
The first line of work relies on \emph{model-centric} signals. 
Motivated by the observation that not all samples contribute equally to learning, and that not all are learned equally well~\cite{katharopoulos2018not}, these importance-sampling schemes prioritize challenging or high-impact samples through \textit{posterior} quantities such as gradients or losses~\cite{katharopoulos2018not, alain2015variance, lin2024not}. 
Originally developed in computer vision to handle class imbalance, such schemes have been used to accelerate training and have gained traction in large language models~\cite{lin2024not, mindermann2022prioritized}. 
However, since the signal is derived from the surrogate itself, it is sensitive to the reactive feedback of the model during training. 
The second line of work relies on \emph{data-centric} signals. 
Here, deep imbalanced regression compensates for rare target values in two ways. One is by estimating the target density with kernel smoothing and up-weighting low-density regions, as in label distribution smoothing~\cite{yang2021delving}, DenseWeight~\cite{steininger2021density}, and SMOGN~\cite{branco2017smogn}. 
The other is by rescaling the loss to balance contributions across the target range~\cite{ren2022balanced}.
Computed once from the data, such signals avoid the model-feedback loop issue of model-centric signals.
Most of these methods reduce each sample to a single aggregate statistic of the target value, whose choice is largely a matter of convention. 
More specifically, depending on the system, the $L_1$ norm, $L_2$ norm, or entropy could each serve, with no uniform criterion or physical insight for selecting among them. 
Moreover, such hand-picked statistics have been shown to transfer poorly to high-dimensional systems~\cite{ren2022balanced}. 

Together, these observations point to a gap, which is the absence of dynamically meaningful indicators that describe the intrinsic imbalance of the underlying system, and of a principled, data-centric scheme designed to exploit them. 

Dynamical systems theory offers exactly such indicators. 
Indeed, the imbalance can be well characterized in phase space by indicators rooted in extreme value theory and Poincar\'e recurrences -- the local dimension $d$ and the inverse persistence $\theta$. 
Previous work empirically shows that forecast errors are not uniform across the dynamical regimes characterized by $d$ and $\theta$, but instead concentrate disproportionately in the higher-$d$ and higher-$\theta$ states~\cite{fang2025dynamical}; of the two, the association with $d$ is stronger and more robust. 
The local dimension $d$ measures how the invariant measure scales within a small neighborhood of a given state or, equivalently, how the frequency with which the trajectory recurs near that state scales with the size of the neighborhood~\cite{faranda2017dynamical, faranda2024statistical}.
This recurrence-based definition gives $d$ a clear physical meaning tied to the geometry of the attractor: low $d$ marks geometrically dense, frequently revisited regions with many historical analogues, whereas high $d$ marks sparse, rarely visited regions with few such analogues, where the local dynamics are more complex and harder to resolve.
The local dimension thus provides a model-free, \textit{a priori} measure of dynamical complexity, computed once from the historical trajectory and requiring no model feedback.
The underlying framework of Poincar\'e recurrences has since been extended to spatiotemporal data~\cite{dong2025spatio} and leveraged to develop predictability indicators~\cite{dong2025time, yang2026hierarchy} and to understand weather pattern changes~\cite{dong2024indo}.
Moreover, the same construction applies uniformly across systems, from low-dimensional attractors such as Lorenz-63 to high-dimensional geophysical fields~\cite{faranda2024statistical}.

Building on this idea, we introduce Dynamics-Aware Weighting (DAW), a data-centric reweighting framework. 
DAW leverages the local dimension $d$ to reshape the loss landscape. 
By assigning higher weight to rare, high-$d$ regimes, DAW encourages the network to devote more capacity to dynamically complex states, where forecast errors tend to be larger. 
We evaluate DAW on the chaotic Kuramoto-Sivashinsky (KS) equation against three baselines that share the same backbone and training budget: uniform training, statistical density weighting (DenseWeight~\cite{steininger2021density}), and a shuffled-weight ablation (RandomWeight). 
Over autoregressive rollouts, DAW attains the lowest error and the slowest loss of spatial correlation, delaying spatial correlation collapse the longest while leaving the intrinsic limits of chaotic predictability intact. 
Moreover, the event-level analysis suggests that DAW achieves this by effectively suppressing the localized error jumps incurred at sharp increases in $d$, which, for KS, coincide with physical processes such as wave-merging. 
This advantage is not specific to the KS geometry. 
On the low-dimensional Lorenz-63 system, the same ordering of methods holds, and DAW delays correlation collapse by roughly 80\% relative to uniform training (Appendix~\ref{app:daw_lorenz}).
A practical strength of DAW is its plug-and-play nature. 
Since $d$ is computed offline, it introduces no architectural change and no inference overhead, leaving the surrogate as lightweight as its unweighted counterpart and ready to be combined with a range of neural architectures.

\section{Methodology}\label{sec:methods}
In the following, we introduce the local dimension (Section~\ref{subsec:d-theory}), which is then used to construct DAW (Section~\ref{subsec:daw}), together with its theoretical motivation (Section~\ref{subsec:daw-theory}).

\subsection{Phase Space Recurrence and Local Dimensions $d$}\label{subsec:d-theory}
To quantitatively isolate dynamically critical regimes from trivial backgrounds, we characterize the system's dynamics in phase space through the local dimension indicator $d$.
Formally, the local dimension $d_{\mu}(x)$ of an invariant measure $\mu$ at a state $x$ in the phase space $M$ is defined as
\begin{equation}
    d_{\mu}(x) = \lim_{r\rightarrow 0}\frac{\log\mu(B(x,r))}{\log r},
    \label{eq:local_dim_limit}
\end{equation}
where $B(x,r)$ denotes a ball centered at $x$ with radius $r$ (see also Ref.~\cite{faranda2024statistical}).
This quantity describes how the probability mass of the invariant measure scales within a small neighborhood around the state $x$.

In practice, observational data and numerical simulations are inherently finite.
Therefore, $d$ is estimated by combining the Poincar\'{e} recurrence theorem with Extreme Value Theory (EVT).
For a given reference state $z$ (e.g., a specific spatiotemporal snapshot) and a system trajectory $x(t)$, the negative logarithmic distance is defined as:
\begin{equation}
    g(x(t)) = -\log\big(\text{dist}(x(t), z)\big).
    \label{eq:log_return}
\end{equation}
Requiring that a point on the orbit falls within a small ball of radius $e^{-u}$ around the reference state $z$ is equivalent to requiring that $g(x(t))$ exceeds a high threshold $u(q)$, corresponding to a high quantile $q$, which is set to 0.99 in this work.
The probability of these threshold exceedances converges to the Generalized Pareto Distribution (GPD) family:
\begin{equation}
\begin{aligned}
X &= g(x(t)), \\
\mu\big((X-u(q)) >y \mid X&\geq u(q)\big)
\approx \exp\left(-\frac{y}{\sigma}\right),
\end{aligned}
\label{eq:gpd}
\end{equation}
where the scale parameter $\sigma$ depends explicitly on the reference state $z$ (see also Refs.~\cite{faranda2024statistical,faranda2017dynamical}).
The local dimension $d$, inversely related to the scale parameter of Eq.~\eqref{eq:gpd}, is finally defined as:
\begin{equation}
    d(z) = \frac{1}{\sigma}.
    \label{eq:final_d}
\end{equation}
We emphasize that Eq.~\eqref{eq:final_d} is a finite-resolution estimate at the working quantile $q$ rather than the limit in Eq.~\eqref{eq:local_dim_limit}: for ergodic measures with exact dimension that limit is constant almost everywhere, so the state dependence exploited here reflects the local geometry of the attractor at the scales resolved by the data.
By definition, $d$ measures the geometric sparsity of the attractor around the reference state $z$.
Smaller values of $d$ correspond to regions of the attractor that are geometrically dense and frequently revisited; here few degrees of freedom are active, and the local dynamics remain comparatively simple. Larger values of $d$ instead mark sparse regions with fewer historical analogues, where more degrees of freedom are simultaneously active and the local dynamics are correspondingly more complex and harder to resolve. This asymmetry carries over directly to neural surrogates. Low-$d$ states form highly recurrent regimes whose bulk dynamics dominate the trajectory by frequency of occurrence; owing to this abundance, networks learn to represent them with ease, and may even overfit to their simplicity and redundancy. High-$d$ states, by contrast, occupy the right tail of the dimension distribution: precisely the configurations that demand the most representational capacity are also those most easily neglected under uniform training objectives, leading to systematically higher forecast errors~\cite{fang2025dynamical}.

\subsection{Dynamics-Aware Weighting (DAW)}\label{subsec:daw}
The construction of Dynamics-Aware Weighting (DAW) proceeds in four steps and leverages the local dimension $d$ just introduced.

\textit{Step 1: Density Estimation of the Local Dimension $d$.}
Let $\mathcal{D} = \{ (x_i, y_i) \}_{i=1}^N$ be the training set. 
For each sample $i$, we define a localized dynamics indicator $d_i \in \mathbb{R}$ by concatenating the input-output pair $(x_i, y_i)$ into a higher-dimensional phase space and computing its local dimension via Eq.~\eqref{eq:final_d}.
The indicator $d_i$ thus quantifies the dynamical complexity of each sample pair.

We then estimate the empirical probability density function (PDF) $P(d)$ over the training set using Kernel Density Estimation (KDE):
\begin{equation}
    P(d) = \frac{1}{N h} \sum_{i=1}^{N} K\left(\frac{d - d_i}{h}\right),
\end{equation}
where $K(\cdot)$ is the Gaussian kernel and $h$ is the bandwidth, set by Silverman's rule of thumb as in Section~\ref{sec:baselines}.
To ensure numerical stability, the density is min-max normalized to bounded values $P'(d) \in [0, 1]$.
The resulting density characterizes how local dimension is distributed across the training set, exposing its inherent imbalance.
\begin{proposition}[Invariance of the local dimension under input--output pairing]
\label{prop:lift}
Let $\mu$ be the invariant measure on the attractor $\mathcal{A} \subset M$ and let $f$ be Lipschitz on $\mathcal{A}$ with constant $L_f \ge 1$. 
Define the lift $\Phi(x) = (x, f(x))$ and let $\nu = \Phi_{\ast}\mu$ be the pushforward of $\mu$ onto the graph of $f$. 
Then, for every $x \in \mathcal{A}$ at which $d_\mu(x)$ exists
\[
d_\nu\big(\Phi(x)\big) = d_\mu(x).
\]
\end{proposition}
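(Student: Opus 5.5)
The plan is to compare balls in the product space with balls in $M$ through the bi-Lipschitz character of the lift $\Phi$. I will equip $M\times M$ with the max-metric $\mathrm{dist}((x,y),(x',y'))=\max\{\mathrm{dist}(x,x'),\mathrm{dist}(y,y')\}$. Any other product metric, such as the Euclidean one on concatenated snapshots, is equivalent to it up to a fixed constant factor, and such a factor is harmless in the limit of Eq.~\eqref{eq:local_dim_limit}. Because $\nu=\Phi_\ast\mu$, for every $r>0$ we have $\nu\big(B(\Phi(x),r)\big)=\mu\big(\Phi^{-1}(B(\Phi(x),r))\big)$. It is the support of $\nu$ on the graph of $f$ that lets us pull the problem back to $\mathcal{A}$.

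The key step is the two-sided inclusion
\[
B\!\left(x,\tfrac{r}{L_f}\right)\cap\mathcal{A}\;\subseteq\;\Phi^{-1}\big(B(\Phi(x),r)\big)\;\subseteq\;B(x,r)\cap\mathcal{A}.
\]
For the right inclusion, if $\max\{\mathrm{dist}(x',x),\mathrm{dist}(f(x'),f(x))\}<r$ then in particular $\mathrm{dist}(x',x)<r$. For the left inclusion, if $\mathrm{dist}(x',x)<r/L_f$ then $\mathrm{dist}(x',x)<r$ because $L_f\ge1$, and also $\mathrm{dist}(f(x'),f(x))\le L_f\,\mathrm{dist}(x',x)<r$. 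This is exactly where the hypothesis $L_f\ge 1$ enters; for smaller $L_f$ one can replace it by $\max\{L_f,1\}$. Since $\mu$ is concentrated on $\mathcal{A}$, we obtain
\[
\mu\big(B(x,r/L_f)\big)\le \nu\big(B(\Phi(x),r)\big)\le \mu\big(B(x,r)\big).
\]

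Next I take logarithms and divide by $\log r<0$ for $r<1$, which reverses the inequalities:
\[
\frac{\log\mu(B(x,r))}{\log r}\le\frac{\log\nu(B(\Phi(x),r))}{\log r}\le \frac{\log\mu(B(x,r/L_f))}{\log(r/L_f)}\cdot\frac{\log r-\log L_f}{\log r}.
\]
As $r\to0$ the left side tends to $d_\mu(x)$ by assumption. On the right side, the first factor also tends to $d_\mu(x)$, since $r/L_f\to0$ is just a reparametrization of the limit, and the second factor tends to $1$. The squeeze theorem then gives $d_\nu(\Phi(x))=d_\mu(x)$, and in particular the limit defining $d_\nu(\Phi(x))$ exists.

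The main obstacle is bookkeeping rather than depth. First, the balls must be nondegenerate: for $x$ in the support of $\mu$ we have $\mu(B(x,r))>0$, so the logarithms are finite; if $d_\mu(x)$ exists this positivity is implicit. Second, the product metric must be fixed, since the paper concatenates $(x_i,y_i)$ with presumably the Euclidean norm, so I will state the metric-equivalence remark explicitly. Third, if $d_\mu(x)=+\infty$ is allowed, the same squeeze still works, but I will note it separately.
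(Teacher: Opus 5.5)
Your proposal is correct and follows essentially the same route as the paper: the max-metric on the product, the sandwich $B(x,r/L_f)\subseteq\Phi^{-1}\big(B_\rho(\Phi(x),r)\big)\subseteq B(x,r)$ and the resulting bound on $\nu$-masses, division by $\log r<0$ with the rescaling factor $\log(r/L_f)/\log r\to 1$, and norm equivalence to cover the Euclidean concatenation. Your extra bookkeeping (intersecting with $\mathcal{A}$, positivity of ball masses, and the $d_\mu(x)=+\infty$ case) only refines that same argument.
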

\begin{proof}
Equip the product space with the metric $\rho\big((u,u'),(v,v')\big) = \max\{\operatorname{dist}(u,v),\operatorname{dist}(u',v')\}$.
For $u, v \in \mathcal{A}$
\[
\operatorname{dist}(u,v)
\;\le\;
\rho\big(\Phi(u),\Phi(v)\big)
\;\le\;
L_f \operatorname{dist}(u,v),
\]
where the upper bound uses the Lipschitz property of $f$. 
Consequently
\[
B\!\left(x, r/L_f\right)
\;\subseteq\;
\Phi^{-1}\!\Big(B_\rho\big(\Phi(x), r\big)\Big)
\;\subseteq\;
B(x, r),
\]
and, since $\nu(E) = \mu\big(\Phi^{-1}(E)\big)$
\[
\mu\big(B(x, r/L_f)\big)
\;\le\;
\nu\Big(B_\rho\big(\Phi(x), r\big)\Big)
\;\le\;
\mu\big(B(x, r)\big).
\]
Taking logarithms and dividing by $\log r < 0$ reverses the inequalities. 
The right bound converges to $d_\mu(x)$ by assumption. 
For the left bound, set $s = r/L_f$ and write $\log \mu(B(x,s)) / \log r = \log \mu(B(x,s)) / \log s \cdot \log s/\log r$, where $\log s / \log r = 1 - \log L_f / \log r \to 1$ as $r \to 0$. 
Both bounds therefore converge to $d_\mu(x)$, and the limit defining $d_\nu(\Phi(x))$ exists and equals $d_\mu(x)$. 
Since all norms on the finite-dimensional product are equivalent, the same conclusion holds for the Euclidean metric on the concatenated vector used in practice.
\end{proof}
Proposition~\ref{prop:lift} shows that the lift $\Phi$ is bi-Lipschitz onto the graph of $f$ and therefore preserves the pointwise local dimension; in this sense, the pairing is asymptotically redundant. 
At the finite scales probed by the estimator, however, the paired and state-only indicators may differ.

\textit{Step 2: Base Density Weighting on $d$.}
To counteract the under-representation of low-density samples, we first assign each sample a base weight that decreases affinely with its normalized probability density, following the density-based formulation of DenseWeight~\cite{steininger2021density} (the DenseWeight baseline is defined in Section~\ref{sec:baselines}):
\begin{equation}
    w_{i}^{\text{base}} = \max \Big(1 - \alpha P'(d_i), \, \epsilon \Big),
\end{equation}
where $\alpha \ge 0$ is the intensity parameter controlling how aggressively low-density regions are upweighted, and $\epsilon \ll 1$ is a small constant preventing zero weights.
For all main results we use the same parameter $\alpha = 1.0$ for both DAW and the baseline DenseWeight (the default value used in the original DenseWeight work~\cite{steininger2021density}; the optimal $\alpha$ is task dependent).
A sensitivity analysis over $\alpha$ is included in Appendix~\ref{si:sensitivity_alpha}.

\textit{Step 3: Weight Reshaping with Right-Tail (high-$d$) Focus.}
Because the the density-based weight in Step 2 acts on the empirical distribution $P(d)$, it indiscriminately up-weights not only the rare high-$d$ states associated with critical transitions but also the infrequent quiescent low-$d$ states that carry little dynamical complexity.
To align the weighting with the physical meaning of $d$, we tilt it toward the high-$d$ tail by multiplying each base weight with the min-max normalized local dimension $\tilde{d}_i \in [0,1]$:
\begin{subequations}\label{eq:reshaping}
\begin{align}
    \tilde{d}_i &= \frac{d_i - d_{\min}}{d_{\max} - d_{\min}}, \label{eq:dtilde} \\[0.8em]
    w_{i}^{\text{reshaped}} &= \max\big( w_{i}^{\text{base}} \, \tilde{d}_i ,\ \epsilon \big) . \label{eq:wreshaped}
\end{align}
\end{subequations}
Since $\tilde{d}_i$ is smallest for the most quiescent states and approaches one for the rarest high-$d$ states, the tilt in Eq.~\eqref{eq:wreshaped} suppresses the left tail of $P(d)$ while leaving the weights of critical transitions, such as wave-merging events, essentially unchanged; after renormalization, the training emphasis thus shifts toward the high-$d$ tail, although the combined weight need not be monotone in $d$ near the mode of $P(d)$, as visible in Fig.~\ref{fig:daw_weights}.
The floor in Eq.~\eqref{eq:wreshaped}, applied before the global normalization of Step 4 with the same $\epsilon = 10^{-8}$ used throughout, guarantees strictly positive weights, so every sample, including the state attaining $d_{\min}$, retains a small but nonzero contribution to the loss.

\textit{Step 4: Global Weight Normalization.}
Finally, to preserve the global gradient scale and maintain the stability of optimizers, the reshaped weights are explicitly normalized so that their expectation over the training dataset equals $1.0$:
\begin{equation}
    w_i = \frac{w_{i}^{\text{reshaped}}}{\frac{1}{N} \sum_{j=1}^{N} w_{j}^{\text{reshaped}}}.
\end{equation}
The final calibrated objective for the neural surrogates $\hat{f}_\theta$ is the sample-wise weighted MSE:
\begin{equation}\label{eq:daw_loss}
    \mathcal{L}_{\text{DAW}}(\theta) = \frac{1}{N} \sum_{i=1}^{N} w_i \cdot \big\| \hat{f}_\theta(x_i) - y_i \big\|_2^2.
\end{equation}
Fig.~\ref{fig:daw_weights} illustrates the empirical density $P(d)$ with the weight profiles over different $\alpha$.
The distribution $P(d)$ is sharply peaked over the recurrent low-$d$ bulk and trails off into a thin high-$d$ tail, where the increasing weight is assigned.
At $\alpha = 0$, the allocation reduces to the linear $\tilde{d}$-tilt; as $\alpha$ increases, the density term progressively withdraws weight from the over-represented states near the mode of $P(d)$ and redirects it toward the sparse high-$d$ tail.
\begin{figure}[h]
\centering
\includegraphics[width=0.5\linewidth]{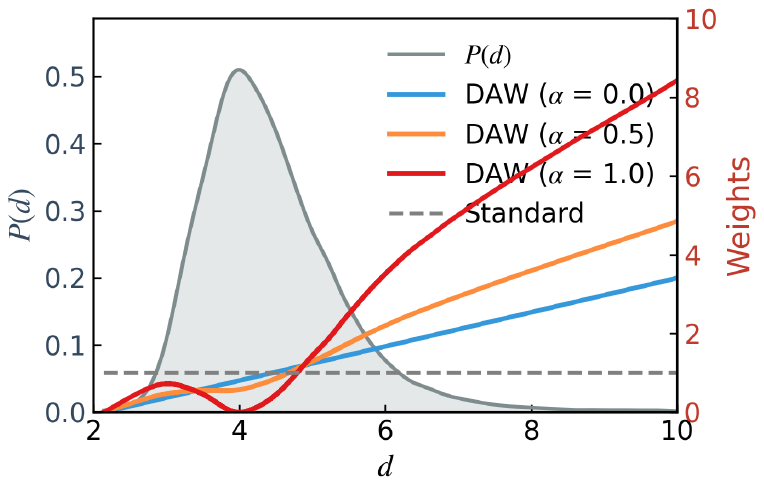}
\caption{\textbf{Visualization of DAW weight distribution.} The left axis displays the empirical probability density $P(d)$ of the local dimension $d$, highlighting the severe imbalance between highly recurrent, low-complexity states and the sparse, right-skewed tail of extreme topological transitions.
The right axis shows the corresponding loss weights allocated by different strategies. The Standard baseline applies a naive uniform weight across all regimes, and density-only weighting attempts purely statistical balancing that elevates both tails of $P(d)$, whereas DAW anchors the loss landscape to the underlying dynamical complexity. 
As the intensity parameter $\alpha$ increases from DAW ($\alpha=0.0$) to DAW ($\alpha=1.0$), the density-weighting component is progressively engaged on top of the $\tilde{d}$-tilt, concentrating loss mass onto the high-$d$ tail and increasingly amplifying rare topological transitions relative to the recurrent low-$d$ bulk.}
\label{fig:daw_weights}
\end{figure}

\subsection{Theoretical Motivation for DAW: Error Propagation and Imbalanced Regression}\label{subsec:daw-theory}

The theoretical construction in this section motivates the design of DAW and delimits its scope; it does not directly derive the specific weighting rule of Section~\ref{subsec:daw}, which remains an empirical construction guided by it.
We start from defining useful quantities. 
More specifically, let $f$ be the true one-step flow map and $\hat{f}$ the surrogate map learned by the neural network, and let $\{x_k\}$ be a reference trajectory of the system, with $x_{k+1}=f(x_k)$.
An autoregressive forecast $\{\hat x_k\}$ is produced by iterating $\hat x_{k+1}=\hat f(\hat x_k)$ from an initial condition $\hat x_0=x_0$. \\[-0.5em]
\begin{proposition}[Finite-horizon error decomposition]
\label{prop:decomp}
Let $U$ be a compact convex subset of the discretized state space $\mathbb{R}^{m}$, containing the reference trajectory $\{x_k\}_{k=0}^{n}$ and the rollout $\{\hat{x}_k\}_{k=0}^{n}$, and let $f \in C^2(U)$ with $\sup_{U}\|Df\| \le \Lambda$ and $\sup_{U}\|D^2 f\| \le K$, where $\|\dots\|$ denotes the Euclidean norm. 
Define forecast error at the $k$-th step as $\delta_k=\|\hat{x_k}-x_k\|$, the one-step model error at the visited state $\tilde{\varepsilon}_k = \hat{f}(\hat{x}_k) - f(\hat{x}_k)$ and the time-ordered cocycle $M_{k\to n} = Df(x_{n-1})\,Df(x_{n-2})\cdots Df(x_{k+1})$, with the empty product equal to the identity.
Then, for every $n \ge 1$
\begin{equation}\label{eq:decomp}
\delta_n = \sum_{k=0}^{n-1} M_{k\to n}\big(\tilde{\varepsilon}_k + R_k\big),
\qquad
\|R_k\| \le \tfrac{K}{2}\,\|\delta_k\|^{2},
\end{equation}
and consequently
\begin{equation}\label{eq:decomp_bound}
\|\delta_n\| \le \sum_{k=0}^{n-1} \big\|M_{k\to n}\big\| \Big(\|\tilde{\varepsilon}_k\| + \tfrac{K}{2}\|\delta_k\|^{2}\Big).
\end{equation}
\end{proposition}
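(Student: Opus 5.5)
The plan is to reduce everything to a one-step linear recursion for the error and then unroll it by induction. Throughout I will read $\delta_k$ as the error \emph{vector} $\hat{x}_k - x_k \in \mathbb{R}^m$; this is the reading under which \eqref{eq:decomp} makes sense, since $M_{k\to n}$ acts on vectors, and it is also what the notation $\|\delta_k\|$ in the bound presupposes. The forecast error of the statement is then its norm. The initial condition $\hat{x}_0 = x_0$ gives $\delta_0 = 0$.

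\textbf{Step 1: one-step identity.} Using $x_{k+1} = f(x_k)$ and $\hat{x}_{k+1} = \hat{f}(\hat{x}_k)$, add and subtract $f(\hat{x}_k)$:
\[
\delta_{k+1} = \big(\hat{f}(\hat{x}_k) - f(\hat{x}_k)\big) + \big(f(\hat{x}_k) - f(x_k)\big) = \tilde{\varepsilon}_k + Df(x_k)\,\delta_k + R_k ,
\]
where I define $R_k := f(\hat{x}_k) - f(x_k) - Df(x_k)\,\delta_k$.

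\textbf{Step 2: remainder bound.} This is the only place the hypotheses on $U$ enter. Because $U$ is convex and contains both $x_k$ and $\hat{x}_k$, the whole segment $x_k + t\delta_k$, $t\in[0,1]$, lies in $U$. On that segment $f$ is $C^2$, so Taylor's theorem with integral remainder gives
\[
R_k = \int_0^1 (1-t)\, D^2 f(x_k + t\delta_k)[\delta_k,\delta_k]\,dt .
\]
Bounding the integrand by $K\|\delta_k\|^2$ and integrating $(1-t)$ yields $\|R_k\| \le \tfrac{K}{2}\|\delta_k\|^2$. Compactness only guarantees that $K$ and $\Lambda$ are finite; the constant $\Lambda$ is not needed for this statement, though it could be used afterwards to bound $\|M_{k\to n}\| \le \Lambda^{n-k-1}$.

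\textbf{Step 3: unrolling.} Write $\eta_k = \tilde{\varepsilon}_k + R_k$, so that $\delta_{k+1} = Df(x_k)\,\delta_k + \eta_k$. I will prove $\delta_n = \sum_{k=0}^{n-1} M_{k\to n}\,\eta_k$ by induction on $n$.
\begin{itemize}
\item For $n=1$, the recursion with $\delta_0=0$ gives $\delta_1 = \eta_0$, and $M_{0\to 1}$ is the empty product, i.e.\ the identity.
\item For the inductive step, multiply the formula for $\delta_n$ on the left by $Df(x_n)$. Since $Df(x_n)\,M_{k\to n} = M_{k\to n+1}$, this gives $\sum_{k=0}^{n-1} M_{k\to n+1}\,\eta_k$. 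Adding $\eta_n = M_{n\to n+1}\,\eta_n$ completes the sum for $n+1$.
\end{itemize}
The bound \eqref{eq:decomp_bound} then follows from the triangle inequality, submultiplicativity of the operator norm, and Step 2.

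\textbf{Main obstacle.} The only real care needed is index bookkeeping. The perturbation created at step $k$ enters $\delta_{k+1}$, so it is propagated by $Df(x_{k+1}),\dots,Df(x_{n-1})$ and not by $Df(x_k)$. This matches the stated definition of $M_{k\to n}$, but it must be checked carefully at the base case and in the inductive step. A secondary point is that the linearization must be taken at the reference state $x_k$ rather than at $\hat{x}_k$, so that the cocycle is built along the true trajectory.
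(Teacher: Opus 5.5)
Your proposal is correct and follows the paper's proof essentially step for step. You add and subtract $f(\hat{x}_k)$, use convexity of $U$ and Taylor's theorem with integral remainder to get $\|R_k\|\le \tfrac{K}{2}\|\delta_k\|^2$, unroll $\delta_{k+1} = Df(x_k)\delta_k + \tilde{\varepsilon}_k + R_k$ by induction from $\delta_0=0$ via $M_{k\to n+1} = Df(x_n)M_{k\to n}$ and $M_{n\to n+1}=I$, and finish with the triangle inequality. Reading $\delta_k$ as the error vector $\hat{x}_k - x_k$ rather than its norm is also how the paper's own proof uses it.
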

\begin{proof}
By definition, $\delta_{k+1} = \hat{f}(\hat{x}_k) - f(x_k) = \tilde{\varepsilon}_k + f(\hat{x}_k) - f(x_k)$.
Since $U$ is convex, the segment joining $x_k$ and $\hat{x}_k$ lies in $U$, and Taylor's theorem with integral remainder gives $f(\hat{x}_k) - f(x_k) = Df(x_k)\,\delta_k + R_k$ with $\|R_k\| \le \tfrac{K}{2}\|\delta_k\|^2$. 
Hence $\delta_{k+1} = \tilde{\varepsilon}_k + R_k + Df(x_k)\,\delta_k$, and the claim follows by induction on $n$ from $\delta_0 = 0$, using $M_{k\to n+1} = Df(x_n)\,M_{k\to n}$ for $k \le n-1$ and $M_{n\to n+1} = I$.
The bound is the triangle inequality.
\end{proof}
\begin{corollary}[Validity horizon of the linear regime]
\label{cor:horizon}
Assume $\Lambda > 1$ (note that $\lVert M_{k\to m}\rVert \leq \Lambda^{m-1-k}$ for all $k < m \leq n$ follows from $\sup_U \lVert Df\rVert \leq \Lambda$ by submultiplicativity), let $\bar{\varepsilon} = \max_{k<n} \lVert \tilde{\varepsilon}_k\rVert$, and set $S_m = \frac{\Lambda^m - 1}{\Lambda - 1}$. 
If $2K\bar{\varepsilon}\,S_n^{2} \le 1$, then $\|\delta_m\| \le 2\bar{\varepsilon}\,S_m$ for all $m \le n$. 
In particular, the linearized decomposition dominates the remainder for horizons $n \lesssim \frac{\log(1/\bar{\varepsilon})}{2\log\Lambda}$.
\end{corollary}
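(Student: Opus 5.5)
The argument is a strong induction on $m$, applied to the bound \eqref{eq:decomp_bound} of Proposition~\ref{prop:decomp} at every intermediate horizon $m\le n$ (the proposition holds for each $m\ge1$, since the trajectory and rollout up to step $m$ lie in $U$). The inductive hypothesis is $\|\delta_k\|\le 2\bar\varepsilon S_k$ for all $k<m$. The base case is immediate: $\delta_0=0=2\bar\varepsilon S_0$.

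For the inductive step, first use $\|M_{k\to m}\|\le\Lambda^{m-1-k}$, which gives
\[
\|\delta_m\|\le \sum_{k=0}^{m-1}\Lambda^{m-1-k}\Big(\bar\varepsilon+\tfrac{K}{2}\|\delta_k\|^2\Big).
\]
Split this into a linear part and a quadratic part.

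\textbf{Linear part.} It equals $\bar\varepsilon\sum_{k=0}^{m-1}\Lambda^{m-1-k}=\bar\varepsilon S_m$.

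\textbf{Quadratic part.} Insert the hypothesis $\|\delta_k\|\le 2\bar\varepsilon S_k$ and use $S_k\le S_n$ (valid since $\Lambda>1$ makes $S$ increasing). Then
\[
\tfrac{K}{2}\|\delta_k\|^2\le 2K\bar\varepsilon^2S_k^2\le 2K\bar\varepsilon^2 S_n^2 = \bar\varepsilon\,\big(2K\bar\varepsilon S_n^2\big)\le \bar\varepsilon
\]
by the smallness assumption. Summing against $\Lambda^{m-1-k}$ therefore bounds the quadratic part by another $\bar\varepsilon S_m$.

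Adding the two parts gives $\|\delta_m\|\le 2\bar\varepsilon S_m$, which closes the induction. The only delicate point is this quadratic step. Bounding $S_k^2$ by $S_n^2$ uniformly, rather than trying to sum $\Lambda^{m-1-k}S_k^2$ exactly, is what makes the factor $2$ and the hypothesis $2K\bar\varepsilon S_n^2\le1$ fit exactly. I expect no further obstacle.

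For the \emph{in particular} statement, note that $S_n\approx \Lambda^n/(\Lambda-1)$. The condition $2K\bar\varepsilon S_n^2\le1$ is then, up to constants depending on $K$ and $\Lambda$, the condition $\Lambda^{2n}\lesssim 1/\bar\varepsilon$, i.e.\ $n\lesssim \log(1/\bar\varepsilon)/(2\log\Lambda)$.

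On that horizon the remainder is small relative to the linear term. Its contribution is at most $\tfrac K2\|\delta_k\|^2\le 2K\bar\varepsilon^2S_k^2$, which is $\bar\varepsilon$ times the factor $2K\bar\varepsilon S_k^2\le1$ and is much smaller when that factor is small. So the linearized sum $\sum_k M_{k\to n}\tilde\varepsilon_k$ dominates the remainder, which is the heuristic claim.
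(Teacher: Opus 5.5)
Your proof is correct and follows the paper's argument: strong induction on $m$ via the bound of Proposition~\ref{prop:decomp} with $\|M_{k\to m}\|\le\Lambda^{m-1-k}$, inserting $\|\delta_k\|\le 2\bar\varepsilon S_k$ and $S_k\le S_n$ to get $\|\delta_m\|\le\bar\varepsilon S_m(1+2K\bar\varepsilon S_n^2)\le 2\bar\varepsilon S_m$. Your reading of the ``in particular'' clause matches the heuristic the paper leaves implicit, and it becomes a rigorous sufficient condition if you use $S_n\le\Lambda^n/(\Lambda-1)$.
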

\begin{proof}
Induction on $m$. The claim holds at $m=0$. If it holds for all $j < m$, then by Proposition~\ref{prop:decomp}
\[
\|\delta_m\|
\le \sum_{k=0}^{m-1} \Lambda^{m-1-k}
\Big(\bar{\varepsilon} + \tfrac{K}{2}\cdot 4\bar{\varepsilon}^{2} S_k^{2}\Big)
\le \bar{\varepsilon}\,S_m\big(1 + 2K\bar{\varepsilon}\,S_n^{2}\big)
\le 2\bar{\varepsilon}\,S_m,
\]
using $S_k \le S_n$ and $\sum_{k=0}^{m-1}\Lambda^{m-1-k} = S_m$.
\end{proof}
\begin{remark}
Proposition~\ref{prop:decomp} is stated with the one-step error at the predicted state, $\tilde{\varepsilon}_k$, whereas the discussion below and the empirical premise use the error at the reference state, $\varepsilon_k = \hat{f}(x_k) - f(x_k)$.
If the residual $\hat{f} - f$ is Lipschitz on $U$ with constant $L_g$, then $\|\tilde{\varepsilon}_k - \varepsilon_k\| \le L_g \|\delta_k\|$.
The substitution is therefore controlled only when the surrogate is close to $f$ in the $C^1$ sense, so that $L_g\|\delta_k\|$ remains small relative to $\|\varepsilon_k\|$ over the horizon of Corollary~\ref{cor:horizon}, and we use the reference-state error in the discussion under this condition.
\end{remark}
Equation~\eqref{eq:decomp} separates two logically distinct contributions to the forecast error.
The factor $\|\varepsilon_k\|$ measures \emph{where large one-step errors occur}, and is a property of the trained surrogate.
The cocycle norm $\|M_{k\to n}\|$ measures \emph{how strongly an error injected at time $k$ is amplified afterwards}, and is a property of the dynamics alone, independent of the surrogate.
Any state-dependent loss weight must be understood through which of these two factors it acts upon.

The empirical premise of this work is that the one-step error $\|\varepsilon_k\|$ is systematically larger at states of high local dimension $d$~\cite{fang2025dynamical}; we validate this premise directly on the KS system through the dimension-binned ($d$-binned) one-step error analysis reported in Section~\ref{sec:results}.
These rare, geometrically complex states (regime transitions, wave-merging events) are visited infrequently.
Therefore, under the uniform mean-squared objective, the loss is dominated by the dense, low-$d$ majority and the surrogate underfits the high-$d$ states.
In other words, training the neural network with a uniform mean-squared loss yields an under-resolved map that smooths high-$d$ excursions toward the behavior of the abundant neighboring states.

The role of dynamics-aware weighting is to reduce $\|\varepsilon_k\|$ at exactly the times $k$ where it is largest, namely the high-$d$ states, and thereby to shrink every term of the sum in Eq.~\eqref{eq:decomp} that originates there.
The amplification factors $\|M_{k\to n}\|$ along the reference trajectory are fixed by the true flow and cannot be altered by any choice of training loss; DAW therefore acts only on the injected one-step errors. 

We note that Eq.~\eqref{eq:decomp} also admits a distributional reading.
Averaged over many rollouts, the expected forecast error weights each state by how often it is visited (the invariant measure $\mu$) and by how much its one-step error contributes downstream.
Training under the uniform objective minimizes $\mathbb{E}_{\mu}\|\varepsilon\|^2$ and therefore spends model capacity in proportion to $\mu$, that is, on the low-$d$ bulk.
The distribution that actually governs rollout accuracy is tilted toward states whose one-step errors are large and systematic, which the premise above identifies with the rare high-$d$ states.
Dynamics-aware weighting reshapes the effective training measure toward these states, partially correcting the covariate shift between the sampling measure and the rollout-relevant measure~\cite{shimodaira2000improving}.
The local dimension $d$ thus serves as an \emph{a priori}, model-free, and inference-free proxy for where the one-step error will be large, computed once from the geometry of the dynamical system.

\section{Experimental Setup}\label{sec:setup}

\subsection{Data}\label{sec:data}
We evaluate the proposed method on the one-dimensional Kuramoto-Sivashinsky (KS) equation, a canonical model of spatiotemporal chaos that exhibits diverse dynamics, including quiescent (near-laminar) regimes, traveling waves, and transient wave-merging events. 

The KS equation is 
\begin{equation}\label{eqn:ks}
    \frac{\partial u}{\partial t} + u \frac{\partial u}{\partial x} + \frac{\partial^2 u}{\partial x^2} + \frac{\partial^4 u}{\partial x^4} = 0,
\end{equation}
where $u$ denotes the observable, $t$ is time, and $x$ is the spatial coordinate defined over the periodic domain $[0, L)$.

The complexity and effective dimensionality of the KS dynamics are controlled by the domain size $L$. 
In this work, we set $L = 22$, following the configuration of~\cite{pathak2018model}.
For $L=22$, the smallest domain at which the KS equation sustains spatiotemporal chaos, the flow is organized by a small set of invariant solutions: relative equilibria (traveling waves) and relative periodic orbits, connected by heteroclinic orbits between equilibria~\cite{cvitanovic2010state}. Quiescent intervals correspond to passages near these invariant solutions, while transient wave-merging events mark fast excursions between them.
We discretize the spatial domain into $N_x = 64$ uniformly spaced grid points. 
We generate in total $2{,}500{,}000$ samples using a small integration step $dt = 0.01$ to ensure numerical stability, and then discard the first $10{,}000$ steps to remove transients and allow the trajectory to settle onto the attractor.
Next, the generated data is downsampled to $dt = 0.25$ for the forecasting task. 
The final length of the total data is $100{,}000$ (around 1200 Lyapunov time), and the length of the training split is around $70{,}000$ (around 850 Lyapunov time), which enables a robust $d$ estimation on both distribution shape and relative magnitude, as shown in Appendix~\ref{si:d_convergence}.

Before training, the data is standardized via Z-score normalization using statistics computed on the training split, and the same statistics are applied to the validation and test splits.

\subsection{Model configuration and Training}\label{sec:training}
The network, optimizer, learning-rate schedule, batch size, and maximum number of training epochs are identical across all methods.
We use a multilayer perceptron (MLP) surrogate implemented in PyTorch, with 6 hidden layers and 128 neurons in each hidden layer.
We optimized the model using the Adam optimizer with a batch size of 128, an initial learning rate of $5 \times 10^{-4}$, and a weight decay of $1 \times 10^{-9}$.
The total number of training epochs for each run is fixed at 1000.

The objective of all experiments can be unified as a weighted mean-squared error (WMSE),
\begin{equation}\label{eqn:weighted_loss}
    \mathcal{L} = \frac{1}{N} \sum_{i=1}^{N} w_i \, \ell(y_i, \hat{y}_i),
    \qquad
    \ell(y_i, \hat{y}_i) = \big\lVert \hat{y}_i - y_i \big\rVert_2^2,
\end{equation}
where $y_i$ and $\hat{y}_i$ are the ground-truth and predicted states for training sample $i$, and the non-weighted baseline can be recovered exactly by setting $w_i = 1\,\forall i$.
The definition of the weight construction is detailed in Section~\ref{sec:baselines}.
To preserve the global gradient scale, every weighting scheme is normalized so that the expected weight over all training data is unity, $\mathbb{E}_i[w_i] = 1.0$.

At evaluation, the trained model is tested in closed loop. 
Its prediction is fed back as the input for the next step, producing an autoregressive rollout.

\subsection{Baselines}\label{sec:baselines}
To assess the efficacy of DAW, we compare it against three baselines that share the identical backbone, optimizer, and training budget as described in Section~\ref{sec:training} and differ only in the sample-weighting function $w_i$ applied in Eq.~\eqref{eqn:weighted_loss}. 

\noindent \textit{Standard.}
Standard training treats all samples equally, implicitly assuming all regimes are uniformly informative:
\begin{equation}
    w_{i}^{\text{Standard}} = 1.0, \quad \forall i \in \{1, \dots, N\}.
\end{equation}
%

\noindent \textit{DenseWeight.}
DenseWeight is a state-of-the-art deep imbalanced regression (DIR) method that reweights samples by the statistical rarity of their target value~\cite{steininger2021density}. 
It applies kernel density estimation (KDE) to a scalar target feature $z_i$. 
The bandwidth was determined via Silverman's rule of thumb, $h = 1.06 \hat{\sigma} n^{-1/5}$, where $\hat{\sigma}$ is the standard deviation of the sample and $n$ is the sample size.
For a multivariate field such as KS, we take the spatial $L_2$-norm of the target state to obtain its empirical density $P(z_i)$, and assign weights decrease affinely to the normalized density:
\begin{equation}
    w_{i}^{\text{DenseWeight}} \propto \max\!\Big(1 - \alpha\, \tilde{P}(z_i), \, \epsilon \Big),
\end{equation}
where $\tilde{P}(z_i) \in [0,1]$ is the min-max-normalized $P(z_i)$, $\alpha$ is a weighting intensity factor, and we use $\epsilon = 10^{-8}$ as a floor that prevents zero weights. 
This baseline directly tests whether emphasizing statistical rarity in target space improves model performance.

\noindent \textit{RandomWeight.}
RandomWeight is designed as a shuffled-weight ablation of DAW, isolating whether DAW's effect comes from the \emph{precise alignment} of high weights with dynamically complex states, or merely from the \emph{increased variance} of the gradient landscape that any non-uniform weighting may induce. 
To test this, we take the exact set of weights produced by DAW, $\mathcal{W}^{\text{DAW}} = \{w_1^{\text{DAW}}, \dots, w_N^{\text{DAW}}\}$, and reassign them to samples through a uniformly random permutation $\pi$
\begin{equation}
    w_{i}^{\text{RandomWeight}} = w_{\pi(i)}^{\text{DAW}}, \quad \pi \sim \text{Permutation}(N).
\end{equation}
By construction, RandomWeight shares the identical marginal weight distribution as DAW, but the correspondence between high weights and high-$d$ regimes is destroyed.
Any advantage of DAW over RandomWeight therefore cannot be attributed to the weight magnitudes alone and must instead reflect the alignment of weight with the local geometry of the attractor, noting that second-order quantities depending on the weight-loss covariance, such as the variance of the weighted gradients, are not exactly matched by a permutation.

\subsection{Evaluation Metrics}\label{sec:evaluation_metrics}
We quantify predictive accuracy and spatial-correlation consistency with two complementary primary metrics, the Mean Absolute Error (MAE) and the spatial Pearson Correlation Coefficient, and report two auxiliary event-level diagnostics that probe robustness during the system's transitions. Throughout, $y$ denotes the ground-truth state and $\hat{y}$ the model prediction over the autoregressive rollout.

\noindent \textit{Mean Absolute Error (MAE).}
MAE measures the average point-wise magnitude of the prediction error over the full spatiotemporal evaluation domain, providing a direct measure of absolute deviation from the true dynamics:
\begin{equation}
    \text{MAE} = \frac{1}{N} \sum_{i=1}^{N} \big| \hat{y}_i - y_i \big|,
\end{equation}
where $i$ indexes the $N$ evaluated state entries (all spatial grid points across all rollout steps). A lower MAE indicates higher point-wise accuracy.

\noindent \textit{Spatial Pearson Correlation Coefficient (Correlation).}
Because point-wise distance alone does not capture whether a prediction preserves the shape of the field, we additionally report the spatial Pearson correlation, which measures how well the surrogate reproduces the spatial pattern, traveling-wave patterns, and phase alignment of the dynamics:
\begin{equation}
    \text{Correlation} = \frac{\sum_{i=1}^{N} (y_i - \bar{y})(\hat{y}_i - \bar{\hat{y}})}{\sqrt{\sum_{i=1}^{N} (y_i - \bar{y})^2}\,\sqrt{\sum_{i=1}^{N} (\hat{y}_i - \bar{\hat{y}})^2}},
\end{equation}
where $\bar{y}$ and $\bar{\hat{y}}$ are the means of the ground truth and predictions over the evaluated domain. 
The correlation lies in $[-1, 1]$, with values near $1.0$ indicating strong spatial pattern consistency.

\noindent \textit{Cumulative Error (CE) and Overall Win Rate.}
To evaluate robustness specifically during the most dynamically complex regimes (the upper quartile of d) that may drive catastrophic error growth, we introduce two event-level diagnostics. 
We first identify extreme events along the autoregressive rollout as contiguous intervals in which the local dimension $d$ exceeds a high-quantile threshold (the $75$th percentile). 
For each such interval we locate the time index of peak local dimension, $t_{\text{peak}}$, which marks the core of the topological transition (e.g.\ a wave merge), and extract a window of $n = 3$ steps on either side of the peak. 
The Cumulative Error (CE) of an event is the sum of absolute point-wise errors over this window:
\begin{equation}
    \text{CE} = \sum_{t=t_{\text{peak}}-n}^{t_{\text{peak}}+n} \sum_{i=1}^{N_{x}} \big| \hat{y}_{i,t} - y_{i,t} \big|,
\end{equation}
where $y_{i,t}$ and $\hat{y}_{i,t}$ are the ground-truth and predicted states at spatial grid point $i$ and time step $t$. 
Computing CE for every method (DAW, DenseWeight, RandomWeight, and Standard) across all extracted high-$d$ windows, we define the \emph{Overall Win Rate} of a method as the percentage of event windows in which it attains the lowest CE. 
Together, CE and the Overall Win Rate quantify each method's comparative ability to suppress error accumulation during the system's most demanding regimes.

\section{Results}\label{sec:results}

We first validate the empirical premise stated in Section~\ref{subsec:daw-theory}, that the one-step error $\|\varepsilon_k\|$ is systematically larger at states of high local dimension $d$.
Fig.~\ref{fig:d-binned-error} partitions the test set into five equal-mass bins of increasing $d$, each containing $20\%$ of the samples. 
For all four objectives, the mean one-step MAE rises monotonically across the bins, confirming that the states carrying the largest one-step error are the rare, geometrically complex ones. 
Within every bin, both density-aware objectives (DAW and DenseWeight) reduce error across the full range of $d$ relative to uniform training (Standard) and the shuffled-weight ablation
(RandomWeight). 
Two features stand out. 
First, DAW's reduction is not confined to the high-$d$ tail it explicitly targets but also lowers error in the low-$d$ bulk, so the reweighting sharpens the surrogate globally rather than trading bulk accuracy for tail accuracy. 
Second, RandomWeight, which inherits DAW's exact weight magnitudes but not their alignment with $d$, remains closest to Standard among the weighted objectives, indicating that the alignment between weight and topology, rather than weight magnitude alone, drives most of the one-step improvement.
\begin{figure}[htb]
\centering
\includegraphics[width=0.4\linewidth]{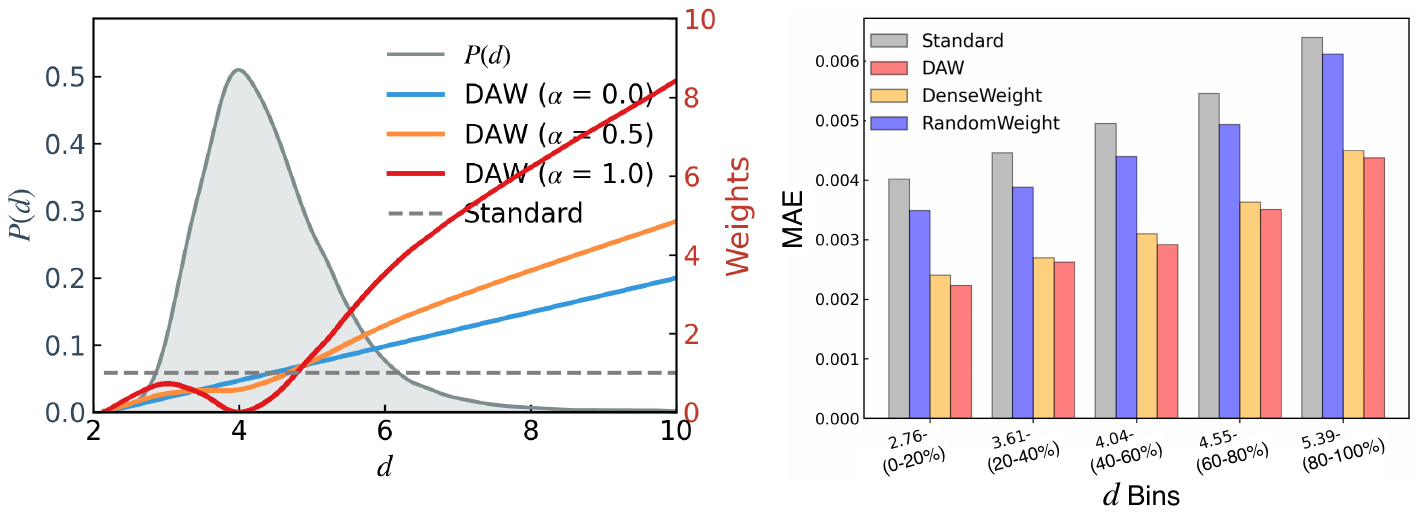}
\caption{\textbf{One-step forecast error binned by local dimension $d$.}
The test set is partitioned into five equal-mass bins of increasing $d$ (each containing $20\%$ of states); bars show the mean one-step MAE of each method within each bin. 
For every training objective the error increases monotonically from the low-$d$ bulk to the high-$d$ tail, validating the premise of Section~\ref{subsec:daw-theory} that one-step error concentrates at high-$d$ states.
DAW attains the lowest error in every bin and reduces error across the full range of $d$ -- including the low-$d$ bulk -- relative to Standard and RandomWeight. }
\label{fig:d-binned-error}
\end{figure}
%

\begin{table}[htb]
    \centering
    \caption{\textbf{Model evaluation results} (mean $\pm$ std over three independent runs). 
    The best-performing method for each metric at every forecast horizon is highlighted in \textbf{bold}.}
    \label{tab:overall_evaluation_results}
    \begin{tabular}{clcc}
        \toprule
        \textbf{Step} & \textbf{Method}
        & \textbf{MAE} $\downarrow$
        & \textbf{Correlation} $\uparrow$ \\
        \midrule
        & DAW           & \textbf{0.0032 $\pm$ 0.0000}
                        & \textbf{1.0000 $\pm$ 0.0000} \\
        & DenseWeight   & \textbf{0.0032 $\pm$ 0.0001}
                        & \textbf{1.0000 $\pm$ 0.0000} \\
        & RandomWeight  & 0.0047 $\pm$ 0.0001
                        & \textbf{1.0000 $\pm$ 0.0000} \\
        \multirow{-4}{*}{1}
        & Standard      & 0.0058 $\pm$ 0.0006
                        & \textbf{1.0000 $\pm$ 0.0000} \\
        \rowcolor{gray!12}
        & DAW           & \textbf{0.1109 $\pm$ 0.0084}
                        & \textbf{0.9876 $\pm$ 0.0020} \\
        \rowcolor{gray!12}
        & DenseWeight   & 0.1320 $\pm$ 0.0195
                        & 0.9825 $\pm$ 0.0050 \\
        \rowcolor{gray!12}
        & RandomWeight  & 0.1586 $\pm$ 0.0053
                        & 0.9736 $\pm$ 0.0031 \\
        \rowcolor{gray!12}
        \multirow{-4}{*}{20 (0.25\,LT)}
        & Standard      & 0.2988 $\pm$ 0.0534
                        & 0.9193 $\pm$ 0.0247 \\
        & DAW           & \textbf{0.2993 $\pm$ 0.0233}
                        & \textbf{0.8874 $\pm$ 0.0170} \\
        & DenseWeight   & 0.3766 $\pm$ 0.0692
                        & 0.8330 $\pm$ 0.0542 \\
        & RandomWeight  & 0.4328 $\pm$ 0.0189
                        & 0.7759 $\pm$ 0.0283 \\
        \multirow{-4}{*}{40 (0.5\,LT)}
        & Standard      & 0.7049 $\pm$ 0.0835
                        & 0.5503 $\pm$ 0.0690 \\
        \rowcolor{gray!12}
        & DAW           & \textbf{0.7779 $\pm$ 0.0306}
                        & \textbf{0.4153 $\pm$ 0.0353} \\
        \rowcolor{gray!12}
        & DenseWeight   & 0.8656 $\pm$ 0.0959
                        & 0.3224 $\pm$ 0.1077 \\
        \rowcolor{gray!12}
        & RandomWeight  & 0.9537 $\pm$ 0.0174
                        & 0.2269 $\pm$ 0.0476 \\
        \rowcolor{gray!12}
        \multirow{-4}{*}{80 (1.0\,LT)}
        & Standard      & 1.0635 $\pm$ 0.0184
                        & 0.1419 $\pm$ 0.0468 \\
        \bottomrule
    \end{tabular}
\end{table}

Tab.~\ref{tab:overall_evaluation_results} summarizes the aggregate performance at four representative rollout horizons. 
At the single-step horizon (Step~1), DAW and DenseWeight perform on par within run-to-run variability and both improve on Standard and RandomWeight, while all four methods retain near-unity spatial correlation.
This near-parity is expected, as one-step accuracy on the training distribution is governed by the quality of the overall learned map rather than by differential coverage of the rare, high-$d$ regimes, which have yet to shape a single-step prediction.

The separation between methods emerges monotonically as the closed-loop rollout extends. 
By $0.25$~LT (Step~20), DAW already leads on both metrics, whereas Standard has degraded markedly, with an MAE nearly triple that of DAW. 
By $0.5$~LT (Step~40) the gap further widens, with DAW preserving a high spatial correlation while Standard falls to the $0.5$ predictability threshold, marking the onset of spatial correlation collapse. 
At $1.0$~LT (Step~80), DAW attains both the lowest accumulated MAE and the highest preserved correlation among all four methods, with DenseWeight, RandomWeight, and Standard following in that order. 
Nonetheless, the intrinsic limits of chaotic predictability still manifest as one should expect, with forecast correlations decaying substantially by $1.0$~LT.
The benefit of DAW is a systematic delay of spatial correlation collapse rather than its elimination. 
Exact values at each horizon are reported in Tab.~\ref{tab:overall_evaluation_results}. 
\begin{figure}[htb]
\centering
\includegraphics[width=0.8\linewidth]{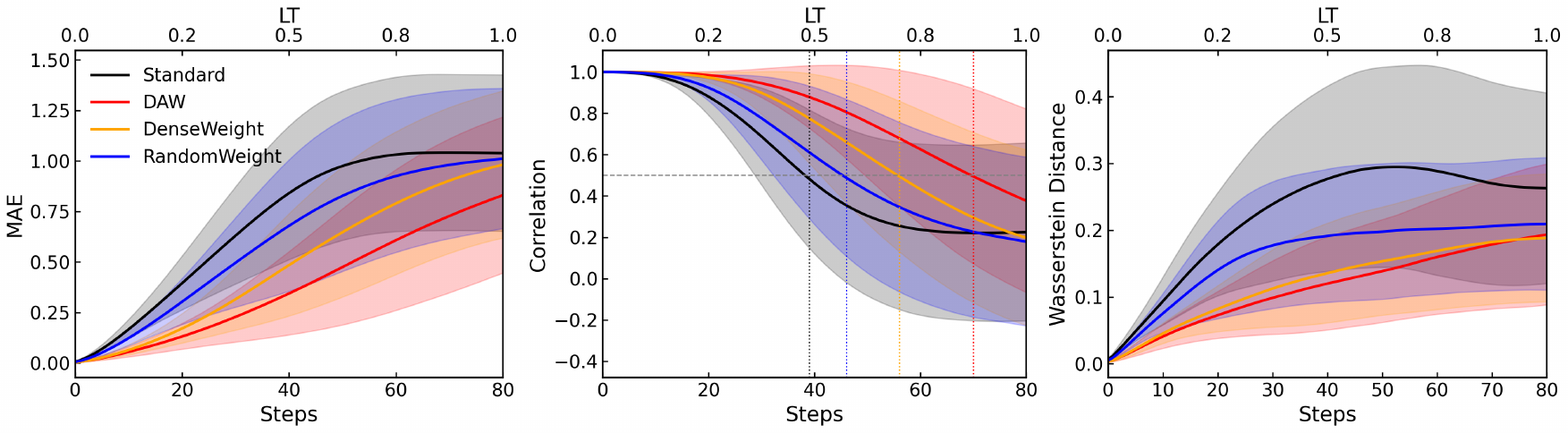}
\caption{\textbf{Autoregressive forecasting performance on the test set.} 
DAW (red) is compared against Standard (black), DenseWeight (orange), and RandomWeight (blue).
\textbf{Left:} Mean Absolute Error (MAE) at each autoregressive step, averaged over the full test set.
\textbf{Right:} Spatial Pearson Correlation Coefficient at each step, averaged over the test set.
In both panels, the primary horizontal axis (bottom) gives the discrete step index; the secondary axis (top) indicates the corresponding physical timescale in Lyapunov Time (LT).
Solid lines are per-step means across the test set; shaded bands denote $\pm 1$ standard deviation across test-set trajectories at each step.
In the correlation panel, a horizontal dashed grey line marks the $0.5$ predictability threshold; coloured vertical dotted lines indicate the step at which each model's mean correlation first crosses this threshold.}
\label{fig:autoregression-curve}
\end{figure}
Fig.~\ref{fig:autoregression-curve} traces this divergence over the full 1.0\,LT rollout.
In the MAE panel (left), all methods depart from near-zero error and progressively diverge; Standard exhibits the most rapid error accumulation, reaching a plateau near 1.0 by approximately 0.6\,LT, while DAW maintains the lowest MAE throughout.
The correlation panel (right) quantifies long-term spatial pattern fidelity against the $0.5$ predictability threshold, marked by the coloured dotted
lines.
Standard is the first to lose spatial coherence, crossing the threshold at approximately 0.5\,LT.
RandomWeight provides only a marginal delay to approximately 0.6\,LT.
DenseWeight preserves a high spatial correlation for a longer horizon, but still degrades progressively.
DAW is the last to cross the $0.5$ threshold, sustaining spatial coherence until approximately 0.85\,LT, revealed by the rightmost dotted line in
Fig.~\ref{fig:autoregression-curve} (right).
This ordering of methods is stable across the evaluated horizon and is consistent with the aggregate values in Tab.~\ref{tab:overall_evaluation_results}.
Moreover, while the main results use quantile $q=0.99$, we show in Appendix~\ref{si:d_quantile} that DAW remains effective at $q=0.98$ and $q=0.995$, outperforming baselines in both cases. This indicates that DAW is robust to the choice of quantile.
\begin{figure}[htb]
\centering
\includegraphics[width=0.9\linewidth]{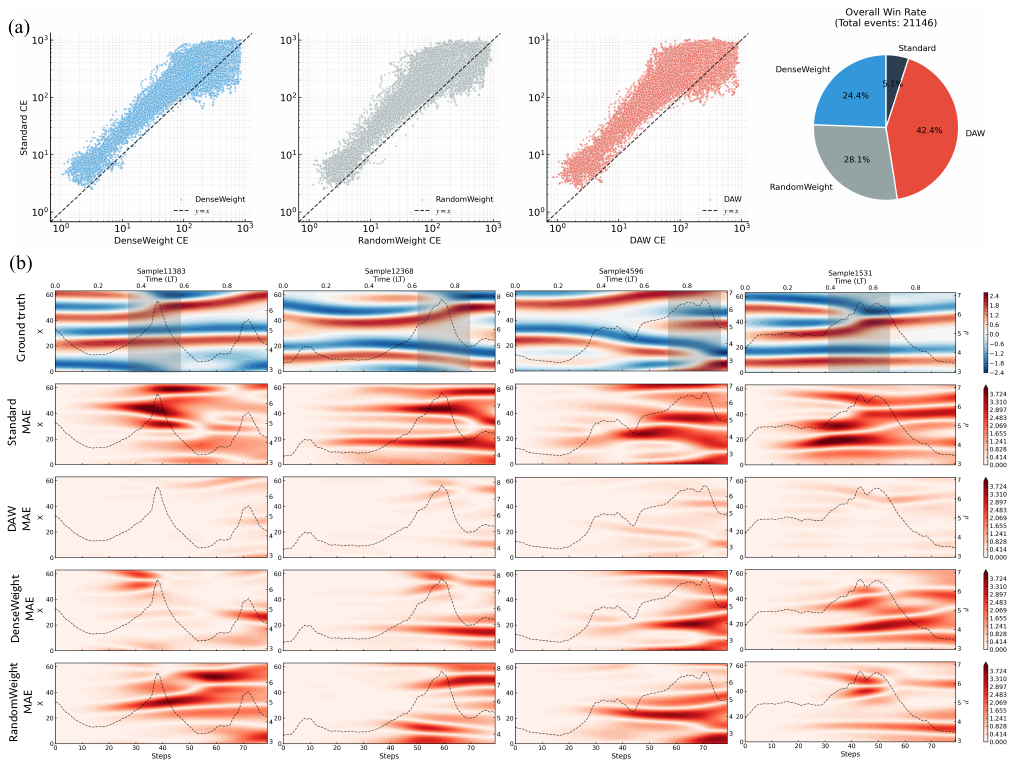}
\caption{\textbf{Event-level error analysis and spatiotemporal visualization of autoregressive forecasting errors on the KS test set.}
\textbf{(a)} \emph{Left three panels:} pairwise log-log scatter plots of the per-event Cumulative Error (CE), defined in Section~\ref{sec:evaluation_metrics} as the sum of absolute point-wise errors over a $\pm 3$-step window centered on the peak-$d$ time index of each identified high-$d$ event window.
Each scatter plot compares Standard CE ($y$-axis) against the CE of one competing method ($x$-axis); points \emph{above} the dashed $y = x$ diagonal indicate events where Standard incurs higher CE (the competing method wins), while points \emph{below} indicate events where Standard incurs lower CE.
\emph{Right panel:} overall win-rate pie chart reporting, across all 21{,}146 identified high-$d$ event windows, the fraction of windows in which each method achieves the lowest CE among all four competitors.
\textbf{(b)} Spatiotemporal MAE contour maps for four representative test trajectories (columns).
The top row shows the ground-truth KS field; subsequent rows show the spatial MAE of Standard, DAW, DenseWeight, and RandomWeight.
In all panels, the primary horizontal axis (bottom) gives the discrete step index and the secondary axis (top) gives the corresponding LT.
The primary vertical axis gives the spatial coordinate $x$.
Overlaid dashed curves trace the temporal evolution of the local dimension $d$ against the secondary vertical axis (right); grey-shaded intervals mark the identified high-$d$ event windows (exceeding the 75th-percentile threshold) at which CE is evaluated.}
\label{fig:sample_error}
\end{figure}

Fig.~\ref{fig:sample_error} provides a deeper look into how DAW mitigates the long-term error accumulation.
We generated rollout forecasts from 14{,}778 different initial conditions and have identified in total 21{,}146 high-$d$ event windows.
The pairwise scatter plots in Fig.~\ref{fig:sample_error}(a) compare, for each of these high-$d$ event windows, the CE of Standard against the ones of DenseWeight, RandomWeight, and DAW, respectively.
In the Standard-DAW panel (rightmost scatter), the large majority of events lie \emph{above} the $y = x$ diagonal.
This indicates that DAW systematically suppresses per-event error relative to the uniform objective, and that its advantage is not confined to a narrow slice of the CE distribution.
The overall win-rate pie chart confirms this observation. 
DAW achieves the lowest CE among all four methods in 42.4\% of the 21{,}146 event windows, far exceeding DenseWeight (24.4\%), RandomWeight (28.1\%), and Standard (5.1\%).
The statistical significance of the win rate is validated with bootstrapping, shown in Appendix~\ref{si:bootstrap}.

The spatiotemporal panels in Fig.~\ref{fig:sample_error}(b) provide a physical perspective into how these event-level differences arise.
Across all four test trajectories, sharp spatial error jumps in Standard are both temporally localized and coincident with the grey-shaded high-$d$ windows.
These selected windows correspond to wave-merging events in the KS system.
Unlike simpler laminar flow and traveling waves, the wave-merging process is characterized by rapid steepening of localized spatial gradients and thus more complex dynamics.
Its onset (e.g. the merge of two pairs of red and blue wave packets) is captured by spikes in $d$, plotted with the black dotted lines.
As established by the decomposition in Eq.~\eqref{eq:decomp}, the cocycle $M_{k \to n}$ amplifies the one-step errors $\varepsilon_k$; since the largest one-step errors are committed at high-$d$ states, suppressing error there is a direct lever for slowing long-term error growth.
As shown in Fig.~\ref{fig:sample_error}(b), DAW suppresses error accumulation across all four samples by consistently mitigating the error jumps at the high-$d$ windows.
In contrast, DenseWeight and RandomWeight achieve only partial or inconsistent mitigation at these critical junctures, as documented by their win rates and aggregate metrics.

Moreover, the choice of factor space is likewise not critical. 
An ablation over the factor space -- input-only $d(x)$, target-only $d(y)$, and paired $d(x,y)$ (Appendix~\ref{si:dx_dy}) -- shows that $d(x)$ and $d(x,y)$ perform on par across the full rollout, and both slightly outperform $d(y)$. 
Proposition~\ref{prop:lift} gives this equivalence for $d(x, y)$ versus $d(x)$; for $d(y)$, the targets $y_i = x_{i+1}$ sample the same invariant measure one step downstream, so all three indicators estimate the same local-dimension field asymptotically.
The slight disadvantage of $d(y)$ is a finite-scale effect rather than a difference in the information contained.

Additionally, to test whether DAW's advantage reflects a general property of the reweighting rather than a feature of the KS geometry, we repeat the study on the Lorenz-63 system -- a low-dimensional chaotic attractor whose dynamics are largely distinct from the spatiotemporal KS field. 
Here the rare, high-$d$ excursions correspond to lobe-switching transitions and the edge of the attractor rather than wave-merging events, yet the same imbalance between recurrent bulk states and sparse transitions persists. 
As shown in Fig.~\ref{fig:lorenz_ar_metrics} (Appendix~\ref{app:daw_lorenz}), DAW attains the lowest rollout MAE and the slowest correlation decay over the four methods, followed by DenseWeight, RandomWeight, and Standard. 
Measured against the $0.5$ correlation threshold, DAW extends the predictability horizon from $\approx 2.3$~LT (Standard) to $\approx 4.1$~LT, with the purely statistical (DenseWeight) and shuffled-weight (RandomWeight) baselines falling in between.
This result is consistent with KS, indicating that DAW's benefit derives from aligning representational capacity with local dynamical complexity, rather than from any property specific to the KS system.

\section{Discussion}
\label{sec:discussion}
Recent advances in Scientific Machine Learning (SciML) have largely pursued model-centric innovations, including physics-informed neural networks, operator learning~\cite{li2020fourier, lu2021learning, wu2025differential}, and the architectural embedding of inductive biases~\cite{wu2022discovering}.
By contrast, the role of the empirical data distribution in shaping long-term forecast accuracy has received comparatively little attention.
The sample-wise uniform Mean Squared Error (MSE) objective treats all states as equally informative, driving the surrogate toward a statistical shortcut that fits the recurrent low-$d$ bulk of data while under-representing the rare, dynamically complex states that may trigger large rollout error accumulation.
Existing imbalanced-regression remedies calibrate the loss purely through observable target-space statistics, without leveraging sound physical insight.
Dynamics-Aware Weighting (DAW) reframes the problem as one of data-centric distribution reshaping rather than architecture design by using the local dimension $d$ as an \textit{a priori} measure of dynamical complexity. 
In this context, $d$ reshapes the optimization landscape toward the high-$d$ tail and forces the network to allocate representational capacity where forecast errors are systematically large.

Our control experiments indicate that the gain stems primarily from where weight is placed. 
RandomWeight, which applies non-uniform weights without regard to the dynamics, does improve on uniform training over autoregressive rollouts (Tab.~\ref{tab:overall_evaluation_results}) and at the event level (Fig.~\ref{fig:sample_error}), so non-uniform weighting alone already contributes; it nonetheless remains well behind DAW at every horizon, indicating that the alignment of weight with the local dimension supplies the larger share of the gain.
DenseWeight, which places weight by statistical rarity of the target alone, improves on uniform training but degrades faster than DAW at long autoregressive horizons (Section~\ref{sec:results}).
Target rarity is thus an unreliable proxy for chaotic dynamical systems; DAW's advantage comes from aligning the weight with the local dimension along the trajectory.
This alignment acts where the decomposition of Eq.~\eqref{eq:decomp} says it matters, and we stress that the theory of Section~\ref{subsec:daw-theory} motivates this placement rather than deriving the weighting rule itself.
By reducing one-step errors at high-$d$ states, DAW shrinks the terms that the cocycle subsequently amplifies, and thereby suppresses the localized error jumps that occur when the trajectory enters high-$d$ regimes.
The net effect is a systematic delay in the loss of spatial coherence and improved long-term accuracy relative to both uniform training and purely density-based weighting, not a removal of the intrinsic limits of chaotic predictability.

A practical strength of DAW is its plug-and-play nature.
Because $d$ is computed offline from the historical trajectory, the framework introduces no architectural change and no inference overhead, leaving the trained surrogate as lightweight as its unweighted counterpart while curbing rollout error growth and preserving spatial-pattern consistency.

In this work, we restrict our experiments to the KS and Lorenz-63 systems.
While the DAW objective is agnostic to the state dimension and to the network architecture, the cost of robustly estimating the \textit{a priori} signal $d$ scales with the \emph{effective} dimension of the attractor. 
A reliable GPD fit requires a trajectory whose length grows with that effective dimension, so care is needed when scaling to higher-dimensional systems.

\section*{Data and Code Availability}
The source code and datasets used in this study will be made publicly available upon publication.

\begin{appendices}
\section{Convergence and Consistency of Local Dimension $d$}\label{si:d_convergence}
\begin{figure}[H]
\centering
\includegraphics[width=0.8\linewidth]{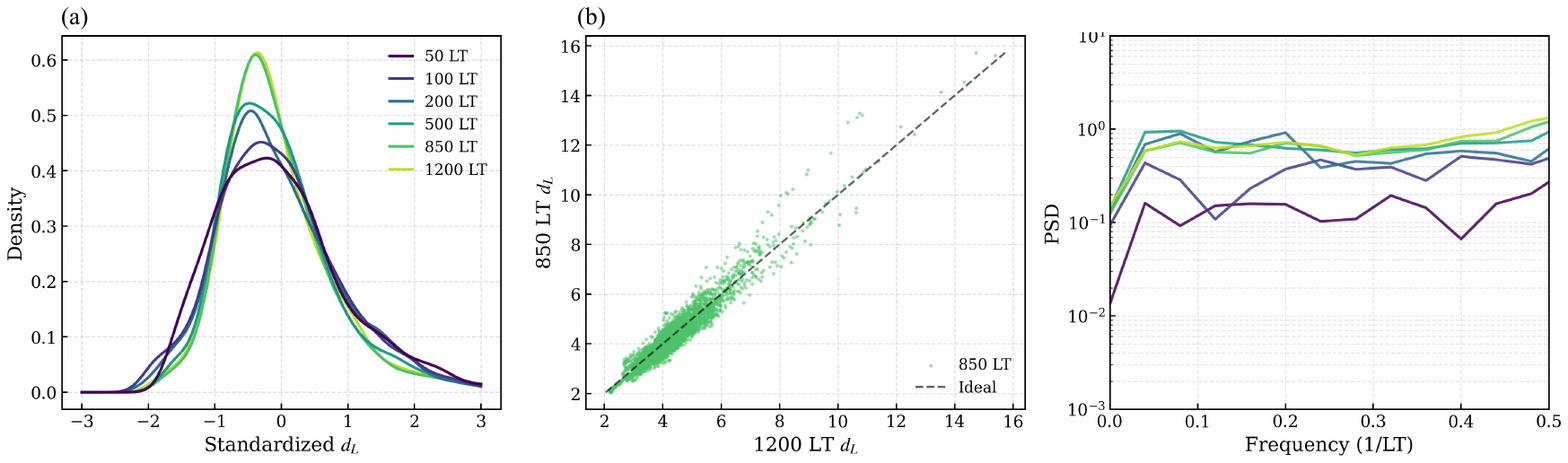}
\caption{\textbf{Convergence and Robustness of the $d$ Estimation.} 
(a) Probability density functions of the standardized $d_L$ across varying observation lengths (from 50 LT to 1200 LT).
(b) Consistency comparison of the $d$ estimated at 850 LT versus 1200 LT. 
A perfect fit is shown with the dashed diagonal line.}
\label{fig:d_convergence}
\end{figure}
To ensure the reliability of the local dimension ($d$) as a dynamical indicator for the DAW framework, we validated the trajectory length required for a robust estimation. 
Fig.~\ref{fig:d_convergence} illustrates the convergence and consistency of $d$ across varying trajectory lengths. 
Panel (a) shows the probability density functions of the local dimension $d_L$ under Z-score normalization, $(d_L - \mu_{d_L})/\sigma_{d_L}$.
It exhibits significant fluctuations at shorter sequence lengths (e.g., 50 to 200 LT). 
However, as the observation length increases, the distribution systematically converges under Z-score normalization, stabilizing effectively at 850 LT. 
To further verify this consistency of magnitude across samples, we conducted a point-wise comparison of the $d$ estimates obtained using 850 LT against a longer 1200 LT baseline, as shown in Panel (b). 
The strong linear consistency observed between the two configurations confirms that extending the sequence beyond 850 LT yields negligible changes to the relative dynamical complexity assigned to each state. Consequently, the 850 LT configuration adopted throughout our experiments robustly extracts local dimensions, which is necessary for accurate density-based weighting.

\section{Sensitivity of $\alpha$}\label{si:sensitivity_alpha}
\begin{figure}[ht]
\centering
\includegraphics[width=0.9\linewidth]{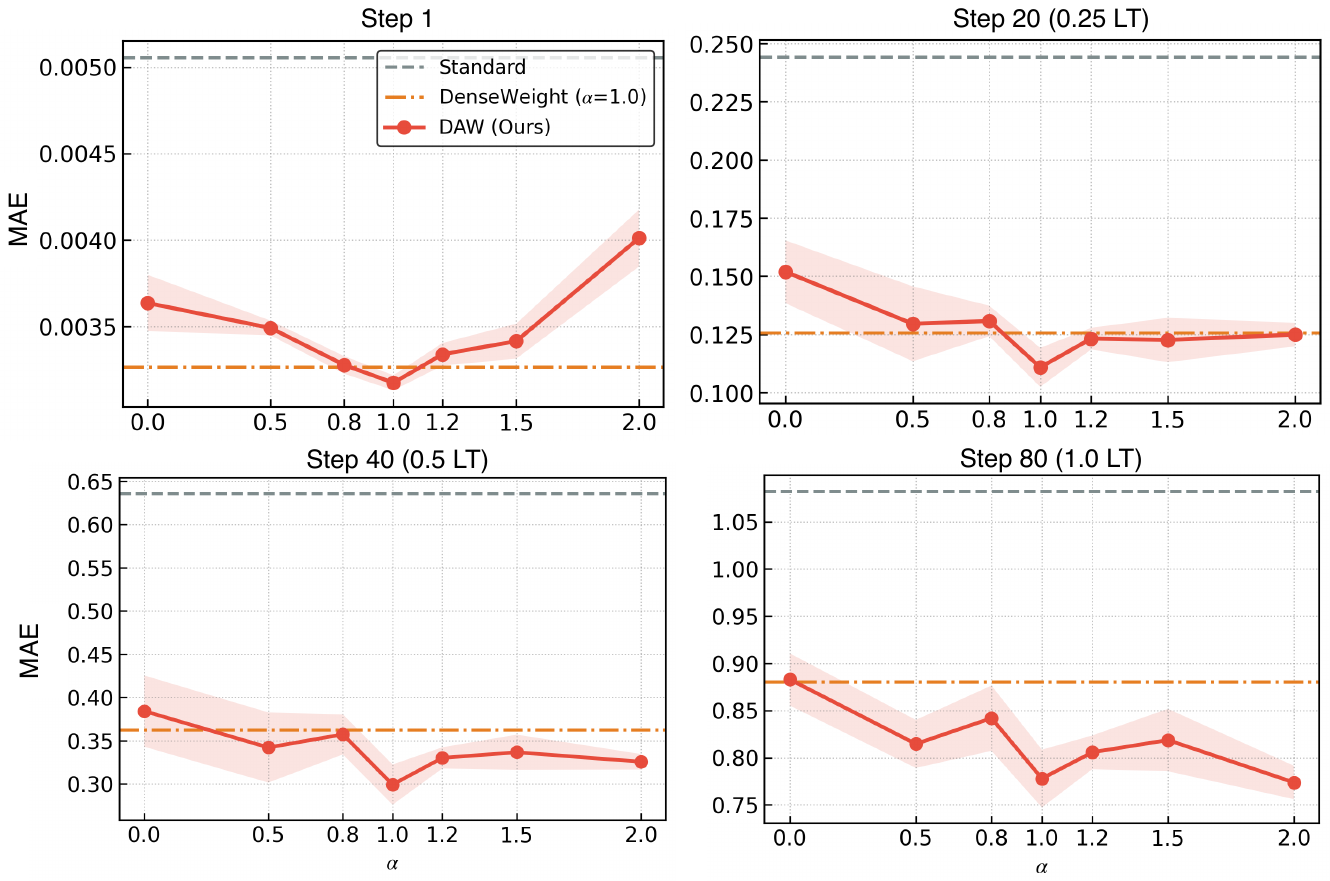}
\caption{Hyperparameter sensitivity analysis of the weighting intensity parameter $\alpha$. 
The plots display the Mean Absolute Error (MAE) evaluated at four autoregressive forecasting horizons: Step 1, Step 20 (0.25 LT), Step 40 (0.5 LT), and Step 80 (1.0 LT). 
The red solid line represents the DAW model (mean $\pm$ std over three independent runs). 
The horizontal grey dashed line and orange dash-dotted line indicate the performance of the Standard baseline and the DenseWeight ($\alpha=1.0$) baseline, respectively. 
At extended forecast horizons (0.5 LT and 1.0 LT), DAW demonstrates a broad and robust performance basin, consistently outperforming both baselines across a wide range of $\alpha$ values, confirming that its efficacy is not contingent upon narrow hyperparameter tuning.}
\label{fig:param_sensitivity}   
\end{figure}
To rigorously evaluate the robustness of the proposed Dynamics-Aware Weighting (DAW) framework, we conduct a hyperparameter sensitivity analysis on the weighting intensity parameter, $\alpha$. 
This parameter controls the magnitude of the loss penalty applied to high-$d$ (dynamically complex) regimes. 
We evaluate the model's performance across a broad spectrum of intensities, ranging from pure linear $\tilde{d}$-tilt ($\alpha \approx 0.0$) to aggressive penalization ($\alpha = 2.0$).

Fig.~\ref{fig:param_sensitivity} illustrates the evolution of the Mean Absolute Error (MAE) at four distinct autoregressive forecasting horizons: Step 1, Step 20 (0.25 LT), Step 40 (0.5 LT), and Step 80 (1.0 LT). 
The horizontal lines denote the performance of the unweighted Standard baseline (grey dashed) and the DenseWeight ($\alpha = 1.0$) baseline (orange dash-dotted). 

At the initial forecast step (Step 1), the predictive errors are universally low, and variations in $\alpha$ yield negligible differences. 
However, as the forecast horizon extends to 0.5 LT and 1.0 LT, the critical importance of dynamics-aware calibration becomes evident.
As expected, when $\alpha \to 0$, the density term disengages and DAW reduces to the pure $\tilde{d}$-tilt of Eq.~\eqref{eq:wreshaped}, which improves on uniform training but accumulates error faster than the full weighting.
Conversely, as $\alpha$ increases, the MAE drops significantly, forming a robust performance basin. 

Crucially, across a wide range (from $\alpha = 0.5$ to $\alpha = 2.0$), the DAW method consistently outperforms both the Standard and DenseWeight baselines at long forecast horizons.
The optimal performance is observed around $\alpha = 1.0$. 
The existence of this broad and stable basin demonstrates that DAW is highly robust; its superior long-term stability is fundamentally rooted in its phase-space topological alignment, rather than relying on brittle, fine-tuned hyperparameter artifacts.

\section{Comparison of using $d(x,y)$, $d(x)$ and $d(y)$}\label{si:dx_dy}

In the main text, the local dimension is estimated on the one-step map: for each sample we concatenate the input--output pair $(x_i, y_i)$ into a higher-dimensional phase space and compute $d(x,y)$.
Proposition~\ref{prop:lift} shows that, in the vanishing-radius limit, this pairing leaves the local dimension unchanged, so that $d(x,y)$ and the state-only indicator $d(x)$ coincide asymptotically; at the finite quantile $q$ used in practice, however, the paired and state-only estimates need not agree. To identify which factor space the weighting should be built on, we recompute $d$ from the input state alone ($d(x)$) and from the target state alone ($d(y)$), holding the DAW construction of Section~\ref{subsec:daw} and all training settings fixed, and
varying only the indicator that enters the density estimate. The autoregressive results are reported in Tab.~\ref{tab:dxy_evaluation_results}.

Two observations follow. First, $d(x)$ and $d(x,y)$ are statistically indistinguishable across the full rollout: their MAE and spatial correlation agree to within run-to-run variability at every horizon, from the single-step prediction
(Step~1) through $1.0$~LT (Step~80). This is consistent with
Proposition~\ref{prop:lift} -- once the recurrence geometry of the input state is retained, the target component adds no separable information at the resolved scales. 
Second, both the paired and the input-only indicators outperform
the target-only $d(y)$, and the margin grows with the forecast horizon. 
At $1.0$~LT, $d(x,y)$ and $d(x)$ reach an MAE of $0.778$ and $0.779$ against $0.794$ for $d(y)$, and a preserved correlation of $0.415$ and $0.417$ against $0.393$. 
The recurrence geometry of the input state thus carries essentially all of the signal that DAW exploits, whereas the target-only indicator is the weakest of the three.

These results support the choice adopted in the main text and, at the same time,
indicate that DAW is not sensitive to it. Because $d(x)$ recovers the performance
of $d(x,y)$ while requiring only the input state, it constitutes an equally valid
and fully \emph{a priori} construction -- the indicator can be obtained from the
historical trajectory without reference to the one-step target, so the reweighting
remains a function of the state recurrence dimension alone.

\begin{table}[ht]
    \centering
    \caption{\textbf{Factor-space ablation results} (mean $\pm$ std over three independent runs). 
    The best-performing method for each metric at every forecast horizon is highlighted in \textbf{bold}.}
    \label{tab:dxy_evaluation_results}
    \begin{tabular}{clcc}
        \toprule
        \textbf{Step} & \textbf{Method}
        & \textbf{MAE} $\downarrow$
        & \textbf{Correlation} $\uparrow$ \\
        \midrule
        & $d(x)$        & 0.0033 $\pm$ 0.0001
                        & \textbf{1.0000 $\pm$ 0.0000} \\
        & $d(x,y)$      & \textbf{0.0032 $\pm$ 0.0000}
                        & \textbf{1.0000 $\pm$ 0.0000} \\
        \multirow{-3}{*}{1}
        & $d(y)$        & 0.0033 $\pm$ 0.0001
                        & \textbf{1.0000 $\pm$ 0.0000} \\
        \rowcolor{gray!12}
        & $d(x)$        & 0.1171 $\pm$ 0.0021
                        & 0.9867 $\pm$ 0.0001 \\
        \rowcolor{gray!12}
        & $d(x,y)$      & \textbf{0.1109 $\pm$ 0.0084}
                        & \textbf{0.9876 $\pm$ 0.0020} \\
        \rowcolor{gray!12}
        \multirow{-3}{*}{20 (0.25\,LT)}
        & $d(y)$        & 0.1210 $\pm$ 0.0059
                        & 0.9856 $\pm$ 0.0010 \\
        & $d(x)$        & 0.3085 $\pm$ 0.0070
                        & \textbf{0.8875 $\pm$ 0.0041} \\
        & $d(x,y)$      & \textbf{0.2993 $\pm$ 0.0233}
                        & 0.8874 $\pm$ 0.0170 \\
        \multirow{-3}{*}{40 (0.5\,LT)}
        & $d(y)$        & 0.3191 $\pm$ 0.0180
                        & 0.8741 $\pm$ 0.0137 \\
        \rowcolor{gray!12}
        & $d(x)$        & 0.7793 $\pm$ 0.0174
                        & \textbf{0.4174 $\pm$ 0.0173} \\
        \rowcolor{gray!12}
        & $d(x,y)$      & \textbf{0.7779 $\pm$ 0.0306}
                        & 0.4153 $\pm$ 0.0353 \\
        \rowcolor{gray!12}
        \multirow{-3}{*}{80 (1.0\,LT)}
        & $d(y)$        & 0.7939 $\pm$ 0.0146
                        & 0.3930 $\pm$ 0.0210 \\
        \bottomrule
    \end{tabular}
\end{table}

\section{Comparison of $d$ quantile}\label{si:d_quantile}
Beyond the weighting intensity $\alpha$, the quantile $q$ is the principal free parameter in the estimation of the local dimension $d$. 
To verify that DAW does not rely on a finely tuned threshold, we evaluate its performance under three settings, $q \in \{0.98,\, 0.99(\text{main text}),\, 0.995\}$.

By construction, the Standard (uniform) and DenseWeight (target-space density) baselines are independent of $q$, we therefore retain the Standard and DenseWeight results from the main text and retrain only DAW and RandomWeight, recomputing $d$ at $q = 0.98$ and $q = 0.995$ while holding all remaining settings fixed.

Fig.~\ref{fig:daw_0.98} and Fig.~\ref{fig:daw_0.995} report the autoregressive rollouts for $q = 0.98$ and $q = 0.995$, respectively. 
Across both settings, DAW attains the lowest MAE and the slowest decay of spatial correlation, delaying the crossing of the $0.5$ predictability threshold the longest.
This result indicates that the performance gain of DAW does not rely on a fine-tuned threshold, and the estimate of $d$ is sufficiently robust over the tested range of $q$ to leave the relative ranking of dynamically complex states, and hence the induced weighting essentially unchanged.

\begin{figure}[H]
    \centering
    \includegraphics[width=0.8\linewidth]{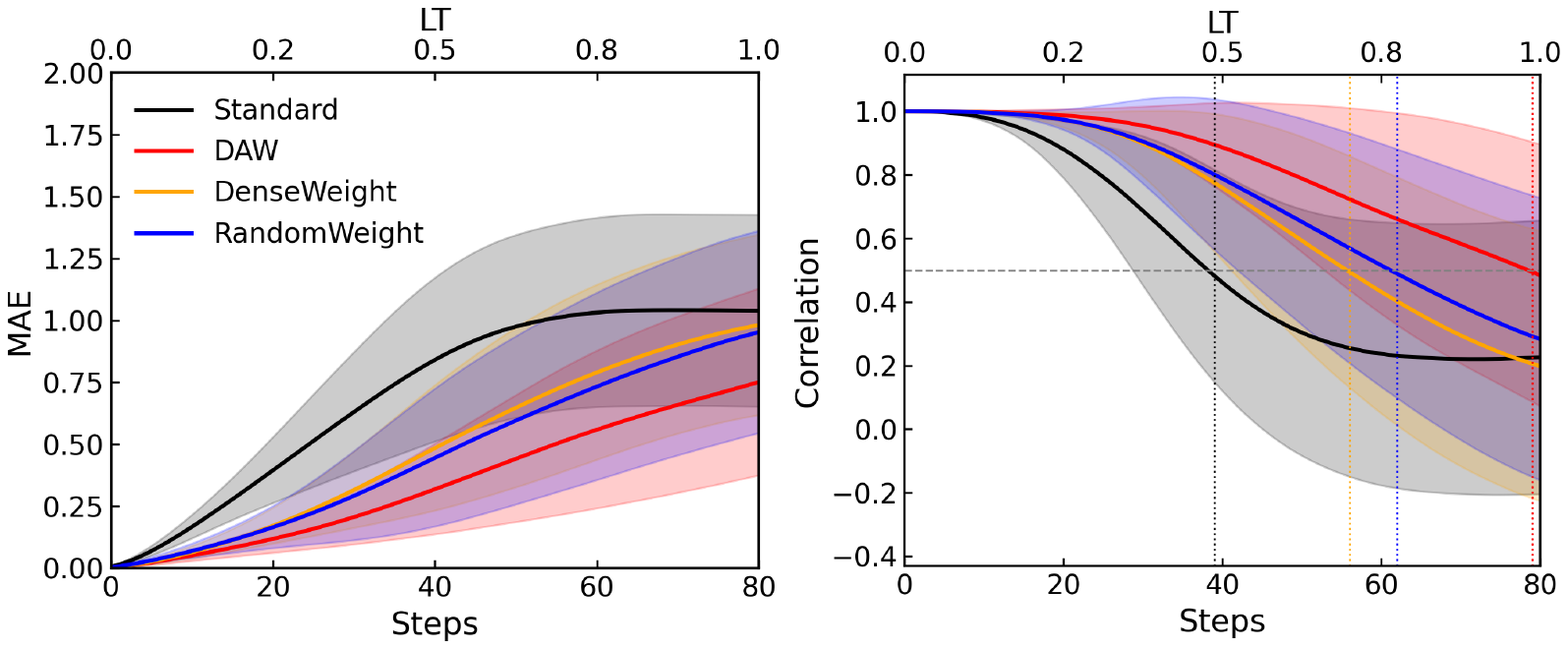}
    \caption{\textbf{Autoregressive forecasting performance on the test set ($q=0.995$).} 
DAW (red) is compared against Standard (black), DenseWeight (orange), and RandomWeight (blue).
\textbf{Left:} Mean Absolute Error (MAE) at each autoregressive step, averaged over the full test set.
\textbf{Right:} Spatial Pearson Correlation Coefficient at each step, averaged over the test set.
In both panels, the primary horizontal axis (bottom) gives the discrete step index; the secondary axis (top) indicates the corresponding physical timescale in Lyapunov Time (LT).
Solid lines are per-step means across the test set; shaded bands denote $\pm 1$ standard deviation across test-set trajectories at each step.
In the correlation panel, a horizontal dashed grey line marks the $0.5$ predictability threshold; coloured vertical dotted lines indicate the step at which each model's mean correlation first crosses this threshold.}
    \label{fig:daw_0.995}
\end{figure}

\begin{figure}[H]
    \centering
    \includegraphics[width=0.8\linewidth]{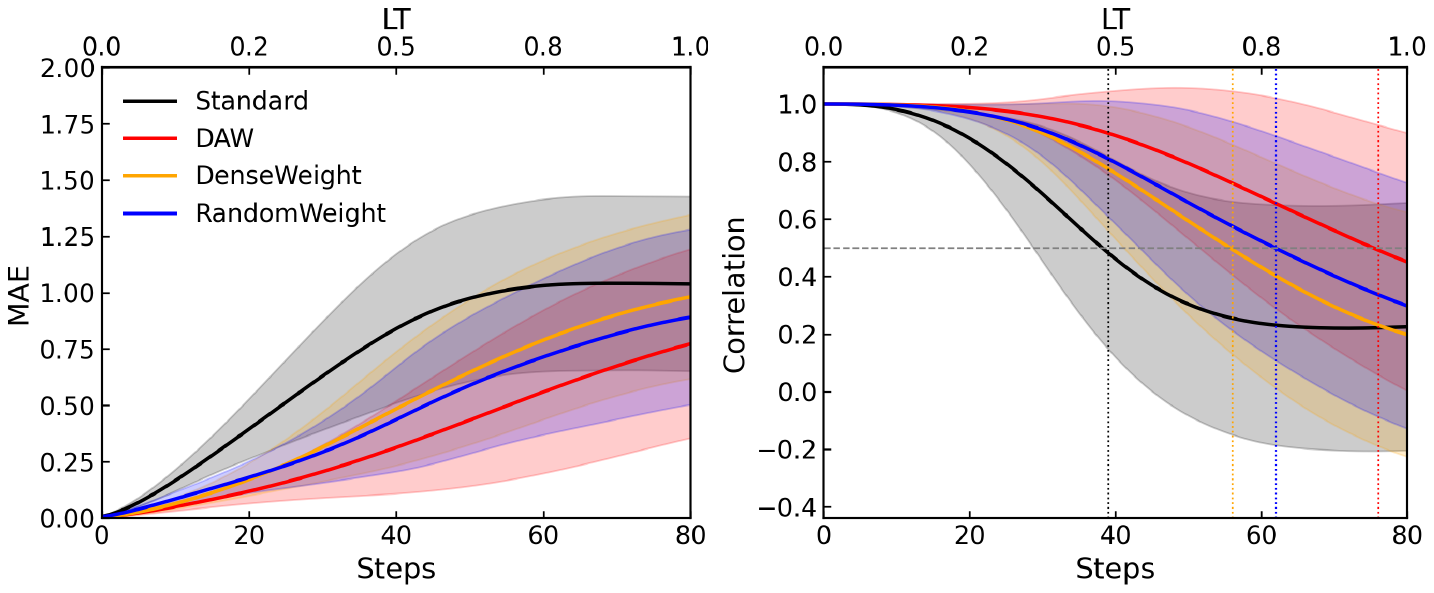}
    \caption{\textbf{Autoregressive forecasting performance on the test set ($q=0.98$).} 
DAW (red) is compared against Standard (black), DenseWeight (orange), and RandomWeight (blue).
\textbf{Left:} Mean Absolute Error (MAE) at each autoregressive step, averaged over the full test set.
\textbf{Right:} Spatial Pearson Correlation Coefficient at each step, averaged over the test set.
In both panels, the primary horizontal axis (bottom) gives the discrete step index; the secondary axis (top) indicates the corresponding physical timescale in Lyapunov Time (LT).
Solid lines are per-step means across the test set; shaded bands denote $\pm 1$ standard deviation across test-set trajectories at each step.
In the correlation panel, a horizontal dashed grey line marks the $0.5$ predictability threshold; coloured vertical dotted lines indicate the step at which each model's mean correlation first crosses this threshold.}
    \label{fig:daw_0.98}
\end{figure}

\section{Additional results on Lorenz}\label{app:daw_lorenz}
To test whether DAW's advantage is specific to spatiotemporal chaos, we repeat the study on the Lorenz-63 system -- a low-dimensional chaotic attractor whose geometry is largely distinct from the KS field.

\subsection{Experimental setup} \label{app:lorenz-setup}
The Lorenz-63 system is governed by
\begin{equation}
    \dot{x} = \sigma(y - x), \qquad
    \dot{y} = x(\rho - z) - y, \qquad
    \dot{z} = xy - \beta z,
    \label{eq:lorenz63}
\end{equation}
with the canonical parameters $(\sigma,\rho,\beta) = (10,\,28,\,8/3)$, for which the system is chaotic with leading Lyapunov exponent $\lambda_{\max}\approx 0.906$ (Lyapunov time $\mathrm{LT}=1/\lambda_{\max}\approx 1.10$).

Trajectories are integrated with a fourth-order Runge-Kutta scheme at $dt = 0.01$. 
The total trajectory spans around 5000 LT, with a 70\%/15\%/15\% training/validating/testing split.
All data is standardized via Z-score normalization using training-split statistics.

The local dimension $d$ is estimated with the identical pipeline as the KS, applied here to the three-dimensional state; the rare, high-$d$ excursions correspond to lobe-switching transitions between the two wings of the attractor -- the low-dimensional analogue of the KS wave-merging events. 
For the Lorenz system, we adopt a simpler MLP with 4 hidden layers and 64 neurons in each hidden layer.
The remaining configurations, and the four weighting schemes (Standard, DAW, DenseWeight, RandomWeight) are identical to Section~\ref{sec:training}.

\subsection{Results}\label{si:lorenz_results}
\begin{figure}[htbp]
\centering
\includegraphics[width=0.8\linewidth]{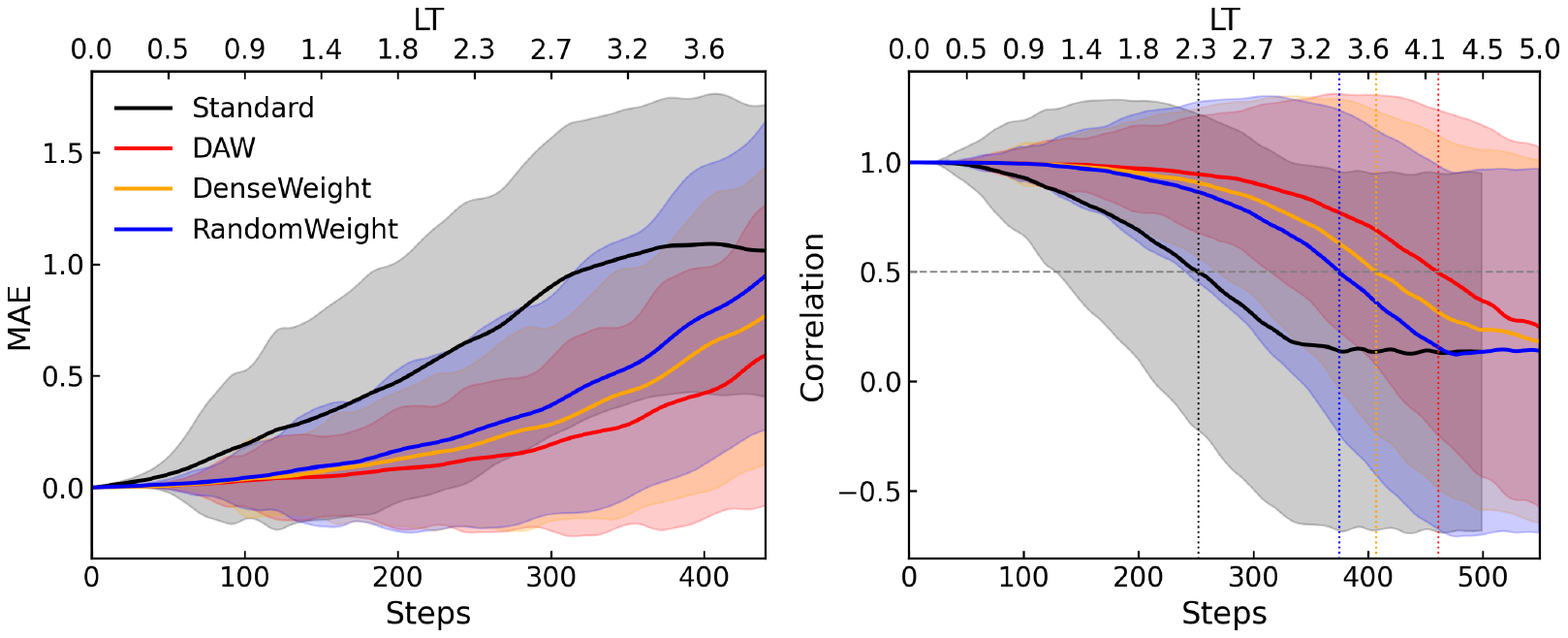}
\caption{\textbf{Autoregressive forecasting performance on the test set for Lorenz.} 
DAW (red) is compared against Standard (black), DenseWeight (orange), and RandomWeight (blue).
\textbf{Left:} Mean Absolute Error (MAE) at each autoregressive step, averaged over the full test set.
\textbf{Right:} Pearson correlation across the three state components at each step, averaged over the test set.
In both panels, the primary horizontal axis (bottom) gives the discrete step index; the secondary axis (top) indicates the corresponding physical timescale in Lyapunov Time (LT).
Solid lines are per-step means across the test set; shaded bands denote $\pm 1$ standard deviation across test-set trajectories at each step.
In the correlation panel, a horizontal dashed grey line marks the $0.5$ predictability threshold; coloured vertical dotted lines indicate the step at which each model's mean correlation first crosses this threshold.}
\label{fig:lorenz_ar_metrics}
\end{figure}
Fig.~\ref{fig:lorenz_ar_metrics} reports the closed-loop rollout performance. 
The ordering of methods established on KS is preserved across both metrics: DAW attains the lowest MAE and the slowest correlation decay throughout, followed by DenseWeight, RandomWeight, and Standard. 
Measured against the $0.5$ predictability threshold, DAW extends the horizon to $\approx 4.1$~LT, compared with $\approx 3.7$~LT for DenseWeight, $\approx 3.4$~LT for RandomWeight, and $\approx 2.3$~LT for Standard -- an increase of roughly $80\%$ over the uniform objective. 
As on KS, all methods eventually lose coherence under the intrinsic limits of chaotic predictability; the benefit of DAW is again a systematic delay of collapse rather than its elimination.

\section{Statistical significance of the win rate with bootstrapping}\label{si:bootstrap}

\subsection{Method}
\label{si:bootstrap-method}
 
The Overall Win Rate in Fig~\ref{fig:sample_error} is reported over the $21{,}146$ high-$d$ event windows extracted from the autoregressive rollouts. 
However, these windows are not mutually independent: windows drawn from the same rollout share an initial condition and are close in time, so treating each window as an independent sample may understate the sampling variability of the win rate. 
To obtain faithful confidence intervals, we therefore conduct a cluster (block) bootstrap at the level of the trajectories rather than the windows, i.e., we first compute the win rate over each trajectory, then aggregate over trajectories with resampled initial conditions~\cite{efron1993introduction}.
 
Let $\mathcal{T}$ index the $T = 13{,}111$ test trajectories that contribute at least one high-$d$ window; trajectories without an event carry no information about the win rate and are excluded. For each trajectory $t \in \mathcal{T}$ we precompute two sufficient statistics per method $m \in \{\mathrm{DAW}, \mathrm{DenseWeight}, \mathrm{RandomWeight}, \mathrm{Standard}\}$: the number of its windows won by $m$, $W_{t,m}$, and its total number of windows, $E_t$. A window is won by the method of
lowest Cumulative Error (CE); exact ties, which are negligible for continuous CE, are split fractionally across the tied methods so that each window contributes unit mass. 
The point estimate recovers the value reported in the main text,
$\widehat{r}_m = \sum_{t} W_{t,m} / \sum_{t} E_t$.
 
We then draw $B = 10{,}000$ bootstrap replicates. Each replicate samples $T$ trajectories from $\mathcal{T}$ with replacement and forms the win rate $r_m^{(b)} = \sum_{t \in \mathcal{S}_b} W_{t,m} \big/ \sum_{t \in \mathcal{S}_b} E_t$, where $\mathcal{S}_b$ is the resampled index set. Because both wins and events are aggregated to the trajectory before resampling, an entire trajectory -- with all of its within-cluster correlation intact -- is included or omitted as a unit, and each replicate reduces to a pair of sums. 
The $95\%$ confidence interval for each method is the $2.5$th--$97.5$th percentile of $\{r_m^{(b)}\}_{b=1}^{B}$.
 
The four win rates are compositional: they sum to one and are consequently negatively correlated, so a per-method interval alone does not establish that DAW ranks first. We therefore test each pairwise contrast directly. 
For every replicate we form the difference $\Delta_m^{(b)} = r_{\mathrm{DAW}}^{(b)} - r_m^{(b)}$ against each baseline $m$ and report its $95\%$ percentile interval. 
An interval lying entirely above zero indicates that DAW attains the higher win rate in essentially every resample, and thus a statistically significant advantage over that baseline.
The procedure is pure post-processing of the stored CE values and requires no model retraining.
 
\subsection{Results}
\label{app:bootstrap-results}
 
Table~\ref{tab:bootstrap} reports the bootstrapped win rates and the pairwise contrasts against DAW. The per-method intervals are narrow -- each spanning less than $1.4$ percentage points -- confirming that the point estimates are stable rather than artifacts of the particular test set. The intervals do not overlap
across methods, so the ordering $\mathrm{DAW} > \mathrm{RandomWeight} > \mathrm{DenseWeight} > \mathrm{Standard}$
by event-level win rate is unambiguous at the $95\%$ level. In particular, the DAW win rate of $42.4\%$ ($95\%$ CI $[41.7, 43.1]$) sits well above the $25\%$ rate expected under chance among four competitors.
 
The pairwise contrasts place the main-text claim on a tested footing. DAW exceeds DenseWeight by $18.0$ points ($95\%$ CI $[16.8, 19.2]$), RandomWeight by $14.3$ points ($[13.0, 15.5]$), and Standard by $37.3$ points ($[36.5, 38.2]$); all three
intervals exclude zero by a wide margin. The advantage is therefore both statistically significant and of a magnitude far larger than the sampling uncertainty, even for the closest competitor. 
This upgrades the reported $42.4\%$ win rate from a point estimate to a tested statement, and corroborates the central finding that DAW suppresses per-event error accumulation at the system's most
demanding high-$d$ regimes more reliably than uniform training, purely statistical density weighting, or the shuffled-weight ablation.
 
\begin{table}[htbp]
\centering
\caption{Cluster (block) bootstrap of the Overall Win Rate over the $21{,}146$
high-$d$ event windows, resampling the $T = 13{,}111$ event-bearing test
trajectories with replacement ($B = 10{,}000$). Win rates are given with $95\%$
percentile confidence intervals; the last column reports the paired difference
$\mathrm{DAW} - \text{method}$ with its $95\%$ interval. All contrasts against DAW
exclude zero.}
\label{tab:bootstrap}
\begin{tabular}{lcc}
\toprule
Method & Win rate (\%) [95\% CI] & $\Delta$ vs.\ DAW (\%) [95\% CI] \\
\midrule
DAW          & $42.4\ [41.7, 43.1]$ & --- \\
RandomWeight & $28.1\ [27.5, 28.8]$ & $+14.3\ [+13.0, +15.5]$ \\
DenseWeight  & $24.4\ [23.8, 25.0]$ & $+18.0\ [+16.8, +19.2]$ \\
Standard     & $\phantom{0}5.1\ [\phantom{0}4.8, \phantom{0}5.4]$ & $+37.3\ [+36.5, +38.2]$ \\
\bottomrule
\end{tabular}
\end{table}

\end{appendices}

\bibliographystyle{unsrt}  
\bibliography{references}

\end{document}